\documentclass[twoside,11pt]{article}

\usepackage{blindtext}

\usepackage{jmlr2e}

\makeatletter
\def\@starteditor{}\def\@endeditor{}\def\@editor{}
\makeatother

\usepackage{amsmath}
\usepackage{amssymb}

\let\jmlrproof\proof \let\endjmlrproof\endproof
\let\proof\relax \let\endproof\relax
   
\usepackage{amsthm}

\let\proof\jmlrproof \let\endproof\endjmlrproof
   
\usepackage{algorithmic}
\usepackage{algorithm}
\usepackage{comment}
\usepackage{array}
\usepackage{enumitem}
\usepackage{xcolor}
\usepackage{graphicx}
\usepackage{lipsum}
\usepackage[colorinlistoftodos]{todonotes}
\usepackage{float}
\usepackage{lastpage}
\usepackage{graphicx}
\usepackage{subcaption}
\usepackage{caption}
\usepackage{graphicx}
\usepackage{booktabs}
\usepackage{placeins}

\DeclareMathOperator{\Tr}{Tr}

\makeatletter
\floatstyle{ruled}
\newfloat{algorithm}{tbp}{loa}
\theoremstyle{plain}
\newtheorem{thm}{\protect\theoremname}
\theoremstyle{definition}

\ifx\proof\undefined
\newenvironment{proof}[1][\protect\proofname]{\par
	\normalfont\topsep6\p@\@plus6\p@\relax
	\trivlist
	\itemindent\parindent
	\item[\hskip\labelsep\scshape #1]\ignorespaces
}{%
	\endtrivlist\@endpefalse
}

\fi
\theoremstyle{plain}
\newtheorem{lem}[thm]{\protect\lemmaname}
\theoremstyle{definition}
\newtheorem{assum}[thm]{Assumption}

\theoremstyle{definition}
\providecommand{\algorithmname}{Algorithm}
\providecommand{\proofname}{Proof}
\providecommand{\definitionname}{Definition}
\providecommand{\lemmaname}{Lemma}
\providecommand{\theoremname}{Theorem}
\floatname{algorithm}{\protect\algorithmname}
\makeatother

\DeclareMathOperator{\Cov}{Cov}

\ShortHeadings{Distributed Training with Correlated Data}{Shitrit and Bistritz}
\firstpageno{1}

\begin{document}

\title{Cordial Learning: Distributed Training with Correlated Data}

\author{\name Sarah Shitrit \email sarahshitrit@mail.tau.ac.il \\
       \addr School of Electrical and Computer Engineering\\
       Tel Aviv University\\
       Tel Aviv, Israel 
       \AND
       \name Ilai Bistritz \email ilaibistritz@tauex.tau.ac.il \\
       \addr School of Industrial and Intelligent Systems Engineering and School of Electrical and Computer Engineering\\
       Tel Aviv University\\
       Tel Aviv, Israel }


\maketitle

\begin{abstract}
We consider a distributed learning task with agents that have correlated data. Specifically, the label of an agent depends on the input of other agents for the same sample, and these inputs are also correlated. Correlated data is the reality when agents share the same environment. Existing decentralized methods, such as federated learning, ignore the structure of the problem and perform poorly on correlated data. On the other hand, centralized approaches are infeasible due to privacy and communication constraints. We introduce cordial (correlated and distributed) learning to address this gap by sharing only low-dimensional outputs between the agents while training local models to extract informative signals from peers. This distributed learning induces a game in which the loss function of each agent depends on the models of others. Assuming a linear model, we prove that cordial learning converges with probability one to a globally optimal solution, despite the nonconvex global objective. Experiments on structured multi-digit MNIST tasks demonstrate that cordial learning remains highly effective even in highly nonlinear settings.
\end{abstract}
\begin{keywords}
  Federated Learning, Distributed Training, Game Theory, Collaborative Learning, Stochastic Optimization
\end{keywords}
\section{Introduction}

When deployed, ML systems rarely operate in a vacuum. Their data comes from a shared environment with many ML systems. Thus, the data of many ML systems can be highly correlated. Current
ML architectures are trained in solitude, ignoring any interaction they may encounter after deployment.

In distributed learning, and federated learning (FL) in particular, all edge devices (agents) face the same learning task. The data collected by an edge device is valuable to all others since it extends their effective training dataset. FL then avoids sending explicit data to the server, maintaining user privacy, while still allowing agents to learn from each other's data indirectly.

When ML agents share an environment, one agent's data is helpful not only because more data is better, but mostly because it shows the agent the part of the bigger picture that it cannot see alone. Technically, it means that the label of one agent depends on the input of others. Even if the agents have different ML tasks, so they cannot learn directly from others' data, the data of others contains information that explains their label better. 

The current distributed learning paradigm cannot benefit from the correlated data of ML agents that share an environment after deployment. In this paper, we introduce a new paradigm we term cordial learning (CL), since it is  \textbf{cor}related and \textbf{di}stributed. Similar to FL, CL is concerned with edge devices that each have a private dataset. However, the $l$-th datapoint of all devices is correlated (input and label), for every index $l$. 

CL is the practical reality in the following example applications: 

\textbf{Multi-Digit MNIST:} Each classifier sees a handwritten digit as input. However, the digits of all agents are dependent (e.g., credit card numbers, error correction codes, or the same handwriting). For example, adjacent digits must add up to 10, or all digits tend to equal a common hidden digit.

\textbf{Remote Sensors:} The agents are sensors that collect local environmental features (e.g., temperature, vibration, gas). The task is to classify whether a global phenomenon, such as a fire or earthquake, is occurring. Since all sensors in the same region are exposed to the same event, their data (inputs and labels) are correlated. However, each sensor only observes part of the full picture. Sharing model outputs, rather than raw data, allows the agents to benefit from the collective information while meeting the communication constraints.

\textbf{Medical Wearables in Shared Environments:}  
Each agent monitors a different patient using wearable sensors that record local physiological signals (e.g., heart rate, skin temperature). The task is to detect the onset of a disease. When patients are nearby, such as on a hospital ward or during an outbreak, their health outcomes are correlated, as infection spreads. In this case, leveraging cross-patient correlations can improve prediction accuracy. However, direct data sharing is prohibited due to privacy constraints. A distributed learning approach that shares only model outputs can enable collaboration without compromising sensitive health data.

\textbf{Formation Control:}  
A group of moving agents often has to maintain a target formation. Examples are platoons of autonomous vehicles or drones. Each vehicle collects local sensor data (e.g., LiDAR, radar, velocity) and must classify whether to accelerate, maintain speed, or decelerate. Since vehicles influence each other's decisions, especially those of their neighbors, their data is correlated. However, each vehicle observes only partial information. Sharing embedded outputs rather than raw sensory data allows collaborative real-time decision-making while preserving autonomy.

Our goal is to design a distributed
training scheme that achieves close to optimal performance in this cordial learning scenario, while maintaining low communication overhead. We measure performance by the sum of loss functions of all agents. 

A centralized architecture would collect data from all agents and train one large model to fit their combined input to their combined labels. Then, the central server is needed in real-time, so all devices can send their input data to the server and receive the result back. Such a centralized architecture requires significant communication between a central server and the agents to collect all their data, both in training and in real-time. Then the central server must train a huge ML model with a large dataset to obtain all agent models. Perhaps even worse, collecting user data in the central server can violate user
privacy. It also creates a single point of failure. Communication, latency, complexity, and privacy are known issues with any centralized architecture. In the context of cordial learning, the centralized architecture is also data-inefficient. The classification
tasks are inherently distributed in the sense that each agent
observes a different part of the whole problem. A centralized agent
will waste a lot of data \textquotedblleft understanding\textquotedblright{} the distributed nature of the problem (e.g., that there are $N$ digits
in the multi-digit example). In other words, since the problem is distributed, a
data-efficient architecture will also have to be
distributed.

In CL, the learning tasks of the agents are intertwined. If two agents each observe a digit, knowing that both digits are even or odd together, then each agent only needs the other agent to detect the parity of its digit. However, being able to classify a digit is very much related to classifying its parity. On the other hand, only sending the decision is not enough information. If an agent believes that the digit is six, but digits nine and three together are more likely, then this information is lost. A similar tradeoff exists in remote sensors that detect fire - even if one sensor is sure that currently there is no fire in its region, some indicators can help sensors in more sensitive areas to lower their detection threshold. 

We propose using an embedded output as the basic communication unit between the agents. The embedding is in the dimension of the output, a short vector compared to the input vectors. This embedded output is similar to the softmax output of a neural network (NN), but not the same. The softmax output of our agents is computed after incorporating the information from other agents, while the embedded output is processed before.

In CL, agents maintain coordination layers dedicated to extracting useful information from the inputs they receive from other agents for the \textit{same} sample. Thus, each agent simultaneously trains its NN along with these additional layers. This introduces a unique coupling (or interaction) between the agents. The weights of the NN of agent $n$ affect not only the loss of agent $n$, but also the loss of all other agents, since it changes the information they receive from agent $n$. As our analysis reveals, this interaction requires careful design. The sum of loss functions is nonconvex even in the linear regression case. Thus, many suboptimal stationary points may exist. 

For the linear theoretical model, we show that if the learning of the ``coordination layers" is slower than the learning of the embedding layers, then CL is guaranteed to converge to the optimal agent models that minimize the nonconvex sum of loss functions. This happens even though each agent locally optimizes only its loss function, ignoring the effect of its model on the loss of others. This makes CL a scalable, distributed training algorithm that can incorporate many devices, with little communication overhead.

\section{Previous Work}
Our work belongs to the growing literature on on-device learning. When the datasets of different agents are independent, each agent has a disjoint dataset for the same classification task. This corresponds to the classic on-device learning setting addressed by federated learning \citep{Kairouz2021,mitra2021linear,koloskova2022sharper}, where no inter-agent correlation exists. Reducing the communication of federated learning and distributed SGD has been recognized as crucial \citep{wang2021cooperative}. Unlike CL, FL is not designed to deal with correlated data. Additionally, FL requires that all devices have the same architecture so that the weights can be averaged. In CL, different devices can have different architectures. 

In \cite{bistritz2020distillation}, we designed a distributed distillation
scheme in which agents share softmax outputs on common unlabeled data to learn from each other's private data without revealing it. Since softmax is a short vector, the resulting communication
overhead is dramatically lower than with FL, and devices with different ML architectures can participate in the training. Distributed distillation does not deal with correlated data. In CL, agents share embedded outputs to learn from other agents about the \textit{same} sample they are currently learning from. 

Game theory and learning have recently become intertwined. The literature
on learning in games studies the interaction between players who learn
to maximize their reward. Then, convergence to a Nash equilibrium (NE) is guaranteed for various distributed algorithms in many scenarios \citep{xu2025learning,ratliff2016characterization,bistritz2022no, chasnov2020convergence,mazumdar2020gradient}. We study the opposite - how games
can help in learning. Game theory has already been proven insightful for learning in generative adversarial networks \citep[GANs;][]{goodfellow2020}.
Unlike the zero-sum games studied in GANs, our players are cooperative and each is an individual ML system \citep{bistritz2020cooperative}. 

Distributed learning with linear models that observe local measurements
with correlated noise was studied in \cite{Hong2022}. The risk functions
include a term that penalizes for deviating from neighboring models.
In our scenario, players do not necessarily converge to the same model.
On the contrary, players may face non-identically distributed data
and need to take heterogeneous actions (e.g., the medical wearables example above). 

Performative prediction studies a game scenario where the actions
of one learner affect both the risk scores and the distributions of
the data of the others \citep{Narang2023,linplug,perdomo2020performative}. The goal is to compute
an equilibrium that predicts the behavior of selfish learners. In
contrast, our agents are cooperative. Furthermore, given the distribution,
the datapoints of different players in \cite{Narang2023} are independent, while ours are correlated. Finally, \cite{Narang2023} assumes that
players can observe each other's actions, which is
unacceptable in our case since it would mean that players can observe
the model parameters of everyone.

CL resembles vertical federated learning (VFL) \citep{liu2024vertical}. In VFL, the different agents have different nonoverlapping features of the \textit{same} sample. This means that the inputs of the agents are correlated. However, in VFL, there is one learning task and one label, which only the ``active agent" possesses. In contrast, in CL, each agent has its own local learning task and its own label, and the labels are correlated between agents. In this sense, VFL is a special case of CL, where all labels are identical and only one agent orchestrates the learning. 

Multitask learning (MTL) \citep{maurer2006bounds} is also concerned with intertwined learning tasks. However, MTL trains a centralized architecture and not distributed agents. Even more important is that MTL is trained for tasks that occur asynchronously in real-time. In contrast, CL trains distributed agents that learn from synchronous correlated data points in real-time.  

While successive classification shares the idea of interdependent models, it differs from our setting in several fundamental ways. In pipelines such as Faster R-CNN \citep{ren2015faster}, the region proposal network and the final classifier share a common feature extraction backbone, introducing architectural coupling. However, the overall computation proceeds sequentially: region proposals are generated and then classified. In contrast, CL involves a network of distributed agents, each owning a private task and dataset. These agents do not share a backbone or data. Instead, they coordinate through shared latent representations generated during training. Information flows bidirectionally, and learning is formulated as a nonconvex game where each agent’s model parameters affect others.

\begin{figure*}[t]
    \centering
    \subfloat[Centralized agent architecture.]
    {
        \includegraphics[width=0.9\linewidth]{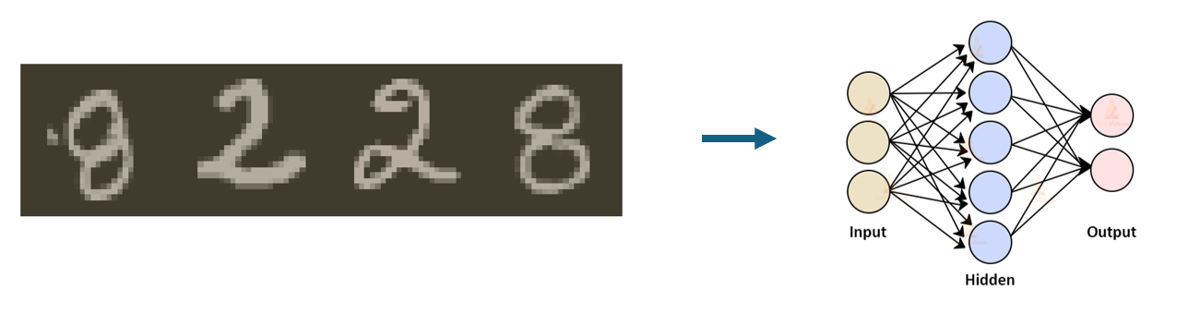}
        \label{subfig:centralized}
    }

    \subfloat[Isolated agents (no communication).]{
        \includegraphics[width=0.9\linewidth]{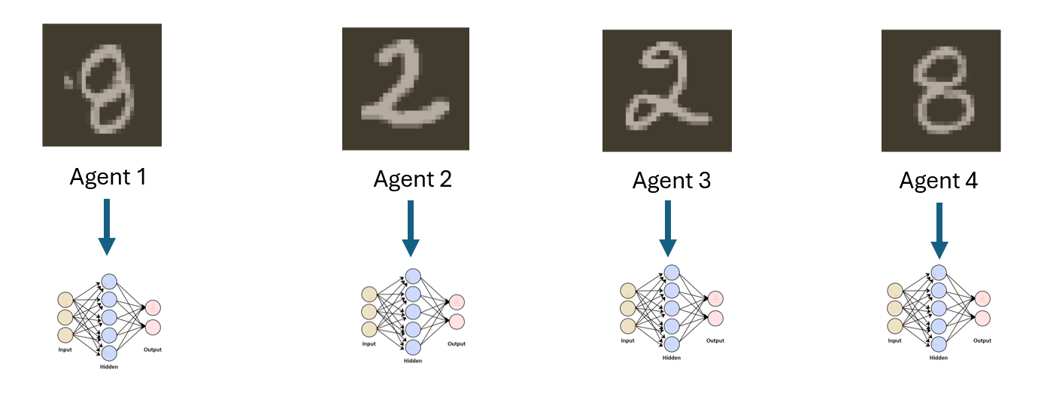}
        \label{subfig:isolated}
    }

    \subfloat[Cordial agents with shared information.]
    {
        \includegraphics[width=0.9\linewidth]{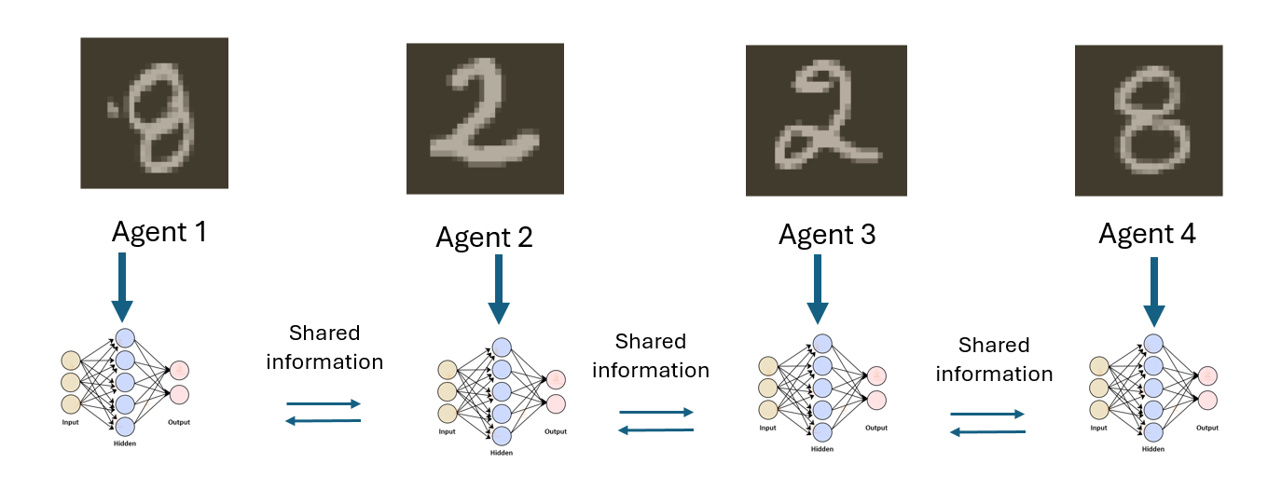}
        \label{subfig:cordial}
    }
    \caption{Different agent configurations for a digit classification task.}
    \label{fig:agent-configs}
\end{figure*}

\section{Problem Formulation}

We consider $N$ distributed agents, each on a different device.
Agent $n$ has a labeled training dataset $\mathcal{D}_{n}=\left\{ \boldsymbol{x}_{l}^{\left(n\right)},y_{l}^{\left(n\right)}\right\} _{l=1}^{D}$,
where $\boldsymbol{x}_{l}^{\left(n\right)}\in\mathbb{R}^{d}$
is an input and $y_{l}^{\left(n\right)}\in\mathbb{R}^{m}$ is a label (either for classification or regression). 
We assume that $\left[\boldsymbol{x}_{l}^{\left(n\right)},y_{l}^{\left(n\right)}\right]_{n=1}^{N}$
is the $l$-th sample of a random vector $\left[\boldsymbol{X}_{n},Y_{n}\right]_{n=1}^{N}$.
The crux of our model is that $\left(\boldsymbol{X}_{1},Y_{1}\right),\ldots,\left(\boldsymbol{X}_{N},Y_{N}\right)$
are statistically dependent. The datasets of all agents are
then $D$ independent samples of the vector $\left[\boldsymbol{X}_{n},Y_{n}\right]_{n=1}^{N}$.
This can be thought of as one large feature vector $\boldsymbol{x}_{l}=\left(\boldsymbol{x}_{l}^{\left(1\right)},\ldots,\boldsymbol{x}_{l}^{\left(N\right)}\right)$
that is distributed across agents, which is reminiscent of the idea of vertical federated learning \citep{liu2024vertical}. However, in cordial learning, each agent has their own learning task with their own label vector.

Ideally, to train the $N$ agents, we would wish to minimize the following empirical risk 
\begin{equation}
L\left(\boldsymbol{\omega}_{1},\ldots,\boldsymbol{\omega}_{N}\right)=\sum_{n=1}^{N}\sum_{l=1}^{D}\ell\left(\boldsymbol{\omega}_{n},\left[\boldsymbol{x}_{l}^{\left(m\right)},y_{l}^{\left(m\right)}\right]_{m=1}^{N}\right)\label{eq:1}
\end{equation}
for a loss function $\ell$ (e.g., MSE), where $\boldsymbol{\omega}_{n}$
is the model parameters of agent $n$. 

A centralized architecture to solve (\ref{eq:1}) treats the $N$
agents as one big model $\boldsymbol{\omega}=\left(\boldsymbol{\omega}_{1},\ldots,\boldsymbol{\omega}_{N}\right)$
where the input is $\boldsymbol{x}_{l}=\left(\boldsymbol{x}_{l}^{\left(1\right)},\ldots,\boldsymbol{x}_{l}^{\left(N\right)}\right)$
and the output is ${y}_{l}=\left({y}_{l}^{\left(1\right)},\ldots,{y}_{l}^{\left(N\right)}\right)$.
However, each agent $n$ can only observe its local data $\left(\boldsymbol{x}_{l}^{\left(n\right)},y_{l}^{\left(n\right)}\right)$
and not $\left(\boldsymbol{x}_{l}^{\left(m\right)},y_{l}^{\left(m\right)}\right)$
for $m\neq n$. For this reason, the traditional approach is just
to train each model separately, such that the $n$-th agent minimizes its loss function:
\begin{equation}
L^{\left(n\right)}\left(\boldsymbol{\omega}_{n}\right)=\sum_{l=1}^{D}\ell\left(\boldsymbol{\omega}_{n},\boldsymbol{x}_{l}^{\left(n\right)},y_{l}^{\left(n\right)}\right).\label{eq:2}
\end{equation}

\subsection{Linear Mean-Squared Error Model}
In general ML scenarios, the loss functions can be highly nonconvex and intractable. To capture the essence of the correlated data scenario and analyze the theoretical performance guarantees of cordial learning, we adopt a linear model. In this model, the random label vector of agent $i$ is a linear mixture of the input of all agents: 
\begin{equation}
    Y_i = A_iX_i + \sum_{j=1,j\neq i}^{N}B^{i}_{j}X_j + \nu_i\label{eq:3}
\end{equation}
where $A_{i}\in\mathbb{R}^{m\times d}$ and $B^{i}_{j} \in\mathbb{R}^{m\times d}$ are the unknown distribution parameters, and $\nu_i \in \mathbb{R}^{m}$ is a zero-mean noise with variance $\mathbb{E}\left\{ \|\nu_i\|^2\right\}=\sigma^{2}$, that is independent of all $\{X_i\}$.

We assume without loss of generality that $\mathbb{E}\left\{ X_{i}\right\} =0$ and $\mathbb{E}\left\{ X_{i}X_{i}^{T}\right\} = I$  for all $i$. As usual, each agent can preprocess the inputs to satisfy these assumptions. We also assume that $\mathbb{E}\left\{\|X_{i}\|^{4}\right\} < \infty$ for all $i$.

The naive approach to this problem is to ignore the correlation between the agents and train each of them separately, using the following loss function for agent $i$:

\begin{equation}
L_{i}(M_i)=\mathbb{E}\left\{ \left\Vert Y_{i}-M_{i}X_{i}\right\Vert ^{2}\right\} \label{eq:4}
\end{equation}
where the norm is the Frobenius norm $\left\Vert A\right\Vert =\sqrt{\Tr(AA^{T})}$ (throughout the paper). 

Such a training scheme is suboptimal. On the other extreme, we could train all agents in a centralized manner, which is encoded in the following loss function for agent $i$:

\begin{equation}
    L_{i}(M_{1}^{i},\ldots,M_{N}^{i})=\mathbb{E}\left\{ \left\Vert Y_{i}-\sum_{j=1}^{N}M_{j}^{i}X_{j}\right\Vert ^{2}\right\} \label{eq:5}
\end{equation}
so the model parameters are given by the matrices $M_j^{i}$ for all $j$. This is equivalent to a large learning model that faces a prediction task with an $mN$-dimensional output. By construction, the centralized approach can achieve the minimal training loss. However, centralized training will require each agent to share all its data with all others. Then the central server will still be needed in real-time to collect the inputs from all devices, process them, and send back the results to the agents. The communication overhead of such sharing is infeasible in large networks, especially in real-time. Moreover, sharing explicit data violates user privacy. Last but not least, when the task structure is distributed (e.g., remote fire sensors or medical sensors), a distributed architecture is more data-efficient than a centralized one that wastes data to understand the distributed structure of the problem. 

Our goal is to bypass the issues of a centralized training scheme while improving performance over the naive isolated scheme. To that end, we design a distributed learning algorithm where each agent $i$ processes the output of each other agent $j\neq i$ using dedicated coordination layers $\Lambda_{j}^{i}$. The loss function for agent $i$ is then given by:
\begin{multline}
L_{i}\left(\boldsymbol{M},\boldsymbol{\Lambda}\right)=\mathbb{E}\left\{ \left\Vert Y_{i}-\left(M_{i}X_{i}+\sum_{j=1,j\neq i}^{N}\Lambda_{j}^{i}M_{j}X_{j}\right)\right\Vert ^{2}\right\} =\\
\mathbb{E}\left\{ \left\Vert \left(A_{i}-M_{i}\right)X_{i}+\sum_{j=1,j\neq i}^{N}\left(B^{i}_{j}-\Lambda_{j}^{i}M_{j}\right)X_{j} + \nu_i \right\Vert ^{2}\right\} . \label{eq:6}
\end{multline}
Therefore, each agent's loss depends on the models of other agents. 

Our distributed training approach introduces a continuous game with the agents as players. The action of each player $i$ is its local model $M_i$ and $\Lambda_{j}^{i}$ for each $j\neq i$. At each turn $t$, each agent will choose a new action  $M_i$ and $\{\Lambda_{j}^{i}\}_j$. The cost function of a player is just its loss function given in \eqref{eq:6}.  The game induced by \eqref{eq:6} presents a conflict between the players. Each player seeks to adjust $M_i$ to minimize their own loss function. However, by doing so they change the loss values of others. In general, pushing $M_i$ in the direction of $A_i^{*}$ (defined in Assumption~1 below) conflicts with pushing $M_i$ such that $\Lambda_i^jM_i = B_i^j$.

Motivated by practical settings, we make the following assumption, which means that the agent model matrices are not arbitrary. Instead, their right processing (as modeled by $M_i$) can ``filter" the input for all agents. For example, while one agent may only be interested in whether the digit the other sees is even, the first layers that are required to detect the digit from the background are helpful for all agents. 

\begin{assum}
Define $A_i^{*}=A_i + \sum_{j\neq i}B^{i}_{j}\Cov\left(X_{j},X_{i}\right)$ where $\Cov\left(X_{j},X_{i}\right)$ is the covariance matrix of $X_j$ and $X_i$. We assume there exists ${\Lambda_{j}^{i*}}$ such that $\Lambda_{j}^{i*}A^{*}_j=B^{i}_{j}$ for all $i\neq j$. \label{assumption1}
\end{assum}

$A_i^{*}$ is the optimal linear estimator of $Y_i$ from $X_i$. Therefore, the assumption above means that the information that agent $i$ can potentially extract from its input, given an optimal model, is enough to compute the effect of agent $i$ on all other agents.

Assumption 1 is much weaker than assuming that \textit{every} improvement of $M_i$ towards $A_i^{*}$ also improves the quality of the information received by other agents, i.e., pushes closer to $\Lambda_i^jM_i = B_i^j$. Such an assumption would have made the game induced by \eqref{eq:6} a potential game, with the sum of loss functions as the potential function \citep{heliou2017learning}. Nevertheless, the cordial learning game is not a potential game. Often, updates of $M_i$ by agent $i$ to decrease its loss function in \eqref{eq:6} increase the loss values for other agents.

Although cordial learning can work without Assumption \ref{assumption1}, its optimality is based on this assumption. Naturally, the more reasonable this assumption is in a given scenario, the more valuable the information the agent sends to its peers. 

Our goal is to design a distributed training algorithm that minimizes the global objective:
\begin{equation}
    \underset{\left\{ M_{i}\right\} _{i},\left\{ \Lambda_{j}^{i}\right\} _{i\neq j}}{\min}\sum_{i=1}^{N}\mathbb{E}\left\{ \left\Vert Y_{i}-\left(M_{i}X_{i}+\sum_{j=1,j\neq i}^{N}\Lambda_{j}^{i}M_{j}X_{j}\right)\right\Vert ^{2}\right\}. \label{eq:7}
\end{equation}

In the CL algorithm, agent $i$ uses the local variables $Z_i^i=X_i$ and $Z_j^i=X_j-\operatorname{Cov}(X_j,X_i)X_i$ for $j\ne i$. Thus, $Z_i^i$ is uncorrelated with $Z_j^i$ for every $j\ne i$. We then work with the residualized versions of
\eqref{eq:6} and \eqref{eq:7}, obtained by replacing $X_j$
with $Z_j^i$ in the loss function of agent $i$.

Assumption~\ref{assumption1} implies that $M_i=A_i^*$ and
$\Lambda_j^i=\Lambda_j^{i*}$ for all $i$ and $j\ne i$
constitute a globally optimal solution of \eqref{eq:7} when using the $Z_j^i$ variables, achieving the minimal total MSE $N\sigma^2$.
However, the sum of the agents' loss functions is nonconvex
(see Appendix~\ref{sec:Nonconvex Objective}).
Each agent's loss is only convex in its own parameters when
the other agents' embedding matrices are held fixed.
Thus, convergence to a global minimum is not guaranteed
even for centralized gradient descent over
$\{M_i,\Lambda_j^i\}_{i,j}$.
This constitutes a major technical challenge in designing
a distributed algorithm that optimizes the residualized
version of \eqref{eq:7}.

\section{Cordial Learning Algorithm and Performance}
We propose a novel algorithm, termed cordial learning (correlated and distributed learning), to deal with the correlated data scenario. In cordial learning (CL), agents only share their embedded outputs and never their explicit datapoints. Then, each agent's model includes coordination layers to process the inputs from other agents. The pseudocode for CL is given in Algorithm 1. To keep it tractable, our theoretical model is linear \citep{maurer2006bounds}, but CL applies more broadly to any scenario in which training data is correlated between agents. We support this claim with experiments on deep neural networks (see Section 6).

CL is distributed since each agent calculates the gradients over its local parameters only. Instead of optimizing the global objective \eqref{eq:7} directly, each agent optimizes its local objective given in \eqref{eq:6}. The local loss function depends on $M_i,\Lambda_{1}^{i},\ldots,\Lambda_{N}^{i}$ for agent $i$, where $M_j$ for $j\neq i$, the models of other agents, are treated as constants.

Centralized gradient descent that optimizes \eqref{eq:7} would require each agent $i$ to know the gradient of the loss function of agent $j$ with respect to $M_i$. CL bypasses that, as each agent runs a two-time-scale SGD to minimize its loss function (\ref{eq:6}) with respect to its parameters alone. The stochastic gradients of an agent employ information sent by others. 

In general, SGD does not converge in games. When it does converge, it is to a stationary point that, under mild conditions, is a
Nash equilibrium (NE) \citep{mazumdar2020gradient}, which is typically socially inefficient \citep{roughgarden2015intrinsic,sessa2019bounding}. Surprisingly, our main theorem shows that in the game induced by the designed loss functions in $\eqref{eq:6}$, a two-time-scale SGD converges to a global minimum of the sum of loss functions. 

\begin{algorithm}[h!]
\caption{\label{alg:Sharing-information-agents}Cordial Learning}

\textbf{Initialization: }Let $Y_i$ be the $m\times 1$ label vector of agent $i$. Let $X_i$ be the $d\times 1$ input vector of agent $i$. $M_i, \{\Lambda_j^i\}_j$ are initialized with random values for each $i$. Let $\eta(t),\varepsilon(t),\rho(t)$ be sequences that are tuned according to the conditions of Theorem \ref{thm:main}.

\textbf{For $t=0,\hdots,T_{iter}-1$ training iterations:}
\begin{enumerate}

\item \textbf{Sample Batches:}
Sample two independent batches $B_t^0,B_t^1$ of $B$ data indices uniformly at random from a global index set shared across agents.
Stack the local input vectors corresponding to these indices in $X_i^{B_t^0}$ and $X_i^{B_t^1}$, matrices of dimensions $d\times B$.

\item \textbf{Communication:}
Send the embedded outputs $M_iX_i^{B_t^0}$ and $M_iX_i^{B_t^1}$ to all other agents.

\item \textbf{Construct Local Variables:}
For each sample ID $l$ in $B_t^0$ and each $j\neq i$, let $\tilde{R}_{j,0}^{i,l}=\frac{1}{B-1}\sum_{k\neq l, k \in B_t^0}X_j^kX_i^{kT}$ and $\tilde{R}_{j,1}^{i}
=
\frac{1}{B}\sum_{k\in B_t^1}X_j^kX_i^{kT}$. Compute
\begin{equation}
M_j\tilde{Z}_{j,0}^{l}
=
M_jX_j^l-M_j\tilde{R}_{j,0}^{i,l}X_i^l,
\qquad
M_j\tilde{Z}_{j,1}^{l}
=
M_jX_j^l-M_j\tilde{R}_{j,1}^{i}X_i^l.
\label{eq:8}
\end{equation}

\item \textbf{Update $M_i$:}

\begin{equation}
M_i(t+1)
=
M_i(t)-\varepsilon(t)\boldsymbol{g}_{M_i}(t)
\label{eq:9}
\end{equation}
where
\begin{multline}
\boldsymbol{g}_{M_i}(t)
=
-2{Y_i}^{B_t^0}(X_i^{B_t^0})^T
+2M_i(t)X_i^{B_t^0}(X_i^{B_t^0})^T
+\\
2\sum_{k=1,k\neq i}^{N}
\left(
\Pi_t\left[\Lambda_k^i(t)\right]
M_k(t)\tilde{Z}_{k,0}^{B_t^0}(t)
\right)
(X_i^{B_t^0})^T
\label{eq:10}
\end{multline}
and $\Pi_t[\Lambda]$ is the projection into the set
$\left\{
\Lambda\in\mathbb{R}^{m\times m}
\mid
\|\Lambda\|_F \le \rho(t)
\right\}$.

\item \textbf{Update $\Lambda_j^i$ for all $j\neq i$:}

\begin{equation}
\Lambda_j^i(t+1)
=
\Lambda_j^i(t)-\eta(t)\boldsymbol{g}_{\Lambda_j^i}(t)
\label{eq:11}
\end{equation}
where
\begin{multline}
\boldsymbol{g}_{\Lambda_j^i}(t)
=
2\left(
M_i(t)X_i^{B_t^0}-{Y_i}^{B_t^0}
\right)
\left(
M_j(t)\tilde{Z}_{j,1}^{B_t^0}(t)
\right)^T
+\\
2\sum_{k\neq i}
\Lambda_k^i(t)
M_k(t)\tilde{Z}_{k,0}^{B_t^0}(t)
\left(
M_j(t)\tilde{Z}_{j,1}^{B_t^0}(t)
\right)^T .
\label{eq:12}
\end{multline}

\end{enumerate}

\textbf{End}
\end{algorithm}

\begin{figure*}[h]
\centering
    {%
    \includegraphics[width=1\linewidth,keepaspectratio]{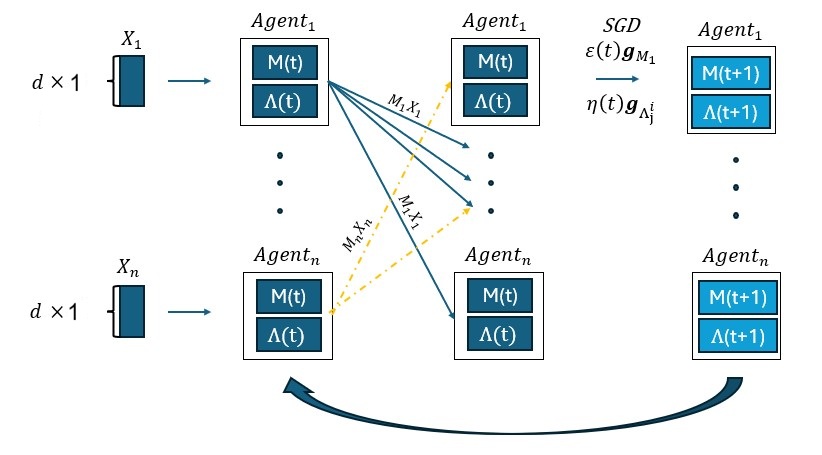}%
    \caption{Cordial Learning information flow}
    \label{fig:algo}%
    }
\end{figure*}

To deal with the correlated input data, each agent $i$ subtracts the component that is correlated with $X_i$ from all of its received data, resulting in local data variables $\tilde{Z}_1^{i},\ldots,\tilde{Z}_N^{i}$. The goal of these local variables is to make the expected gradient with respect to $M_i$ independent of $\{\Lambda^i_j\}_{j\neq i}$ and of $\{M_j\}_{j\neq i}$ (Lemma~3), and also reduce the variance of the stochastic gradients (Lemma \ref{lem:variance} and Lemma \ref{lem: schur}).

Let $\Cov\left(X_{j},X_{i}\right)$ be the covariance matrix of $X_j$ and $X_i$. Define $Z_{i}^{i}=X_{i}$ and $Z_{j}^{i}=X_{j}-\Cov\left(X_{j},X_{i}\right)X_{i}$, which is the version of $X_{j}$ after subtracting the information it holds about $X_{i}$. We then have 

\begin{equation}
    M_{j}Z_{j}^{i}=M_{j}X_{j}-M_{j}\Cov\left(X_{j},X_{i}\right)X_{i}=M_{j}X_{j}-\Cov\left(M_{j}X_{j},X_{i}\right)X_{i}\label{eq:13}
\end{equation}
which implies that agent $i$, which receives $M_jX_j$, can compute $M_{j}Z_{j}^{i}$ if it knows $\Cov\left(M_{j}X_{j},X_{i}\right)$. 

To run CL, $M_jZ_j^i$ is needed in computing the stochastic
gradients $\boldsymbol{g}_{M_i}$ and
$\boldsymbol{g}_{\Lambda_j^i}$. Since
$\operatorname{Cov}(X_j,X_i)$ is unknown, CL constructs two
independent estimates of this covariance. At each iteration, two
independent batches $B_t^0$ and $B_t^1$ are sampled. The batch
$B_t^0$ is used to compute the stochastic gradient, while $B_t^1$
is used to construct an independent covariance estimate.

For each sample $l$ in $B_t^0$, define
\[
\tilde{R}_{j,0}^{i,l}
=
\frac{1}{B-1}
\sum_{k\neq l, k\in B_t^{0}}X_j^kX_i^{kT},
\qquad
\tilde{R}_{j,1}^{i}
=
\frac{1}{B}
\sum_{k\in B_t^1}X_j^kX_i^{kT}.
\]
The
corresponding local variables are
\[
\tilde{Z}_{j,0}^{i,l}
=
X_j^l-\tilde{R}_{j,0}^{i,l}X_i^l,
\qquad
\tilde{Z}_{j,1}^{i,l}
=
X_j^l-\tilde{R}_{j,1}^{i}X_i^l,
\]
where for $j = i$, we set $\tilde{Z}^{i,l}_{i,0} = X^l_i$.

The covariance estimators are independent of the $l$-th sample,
and independent of each other. CL uses
$\tilde{Z}_{j,0}^{i,l}$ in the stochastic gradient with respect to
$M_i$. In the stochastic gradient with respect to
$\Lambda_j^i$, it uses $\tilde{Z}_{k,0}^{i,l}$ in the prediction
term and the independent $\tilde{Z}_{j,1}^{i,l}$ in the
multiplicative term. As Lemma~\ref{lem:expectation} shows, the resulting stochastic gradients are unbiased. 

CL utilizes common randomness between the agents, which generate the correlated samples that correspond to the same sample ID. This can be implemented using traditional epochs where agents go over all datapoints in a pre-agreed order. 

The communication between agents in CL can be implemented with or without a central server. The agents communicate both in training and online after deployment, to extract information from the correlation between their online test (unlabeled) samples using the coordination layers they trained offline. At test time, agent $i$ computes $M_j Z^i_j = M_j X_j - \mathrm{Cov}(M_j X_j, X_i) X_i$ using an estimate of $\mathrm{Cov}(M_j X_j, X_i)$ obtained from training samples with the final trained models.

Our main theorem states that CL converges with probability 1 to a global minimum of the sum of loss functions of all agents. 

Theorem \ref{thm:main} makes a standard SGD assumption on the step sizes of $M_i(t)$ and $\Lambda_{j}^{i}(t)$. On top of that, we assume that the update of the $\Lambda_{j}^{i}\left(t\right)$ is slower than that of the $M_{i}\left(t\right)$, formalized by the requirement that $\sum_{t=1}^{\infty}\frac{\eta^{2}\left(t\right)}{\varepsilon\left(t\right)}<\infty$. 

The projection of $\Lambda_{k}^{i}\left(t\right)$ into the $\rho\left(t\right)$ radius ball is only done when calculating the stochastic gradient $\boldsymbol{g}_{M_i}(t) $. The coordination layers $\Lambda_{k}^{i}\left(t\right)$ only affect the variance of  $\boldsymbol{g}_{M_i}(t) $ and not its bias. Hence, this projection ensures that the variance of $\boldsymbol{g}_{M_i}(t) $ does not explode. This is guaranteed by the assumption that $\sum_{t=1}^{\infty}\varepsilon^{2}\left(t\right)\rho^{2}\left(t\right)<\infty$.

\begin{thm}
    \label{thm:main} Consider $N$ agents that run cordial learning (Algorithm 1), where the $2B$ samples of $[X_n, Y_n]_{n=1}^N$ in $B_t^0$ and $B_t^1$ are independent within each iteration and IID over $t$. Assume that  $\{A_i \}_i$ and $
    \{B^{i}_{j}\}_{i,j}$ satisfy Assumption 1. Let $\varepsilon(t),\eta(t)$ be step size sequences of the embedding layers and coordination layers, respectively, and $\rho\left(t\right)$ be the projection radius. Assume that
  $\sum_{t=1}^{\infty}\varepsilon\left(t\right)=\infty$,
    $\sum_{t=1}^{\infty}\eta\left(t\right)=\infty$, $\rho\left(t\right) \rightarrow \infty$ and $\sum_{t=1}^{\infty}\varepsilon^{2}\left(t\right)\rho^{2}\left(t\right)<\infty$. Further assume that $\sum_{t=1}^{\infty}\frac{\eta^{2}\left(t\right)}{\varepsilon\left(t\right)}<\infty$. Then, for all $i$, the loss function
    \begin{equation}       L_{i}\left(\boldsymbol{M}(t),\boldsymbol{\Lambda}(t)\right)=\mathbb{E}\left\{ \left\Vert Y_{i} - M_{i}(t)Z_{i}^{i} - \sum_{k=1,k\neq i}^{N}\Lambda_{k}^{i}(t)M_{k}(t)Z_{k}^{i}\right\Vert ^{2}\right\}
    \end{equation}
    converges to its global minimum with probability 1. 
\end{thm}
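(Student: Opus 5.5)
The plan is to exploit the decoupling created by the residualized variables: $M_i$ is driven to $A_i^*$ on its own, and then the coordination layers solve a convex problem with asymptotically fixed data.

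\textbf{Step 1: expected $M_i$-gradient.} Using $\mathbb{E}\{X_i X_i^T\}=I$ and the fact that $Z_k^i$ is uncorrelated with $X_i$, I expect
\[
\mathbb{E}\{\boldsymbol{g}_{M_i}(t)\mid\mathcal{F}_t\}=2\bigl(M_i(t)-A_i^*\bigr).
\]
This is the content of Lemma~\ref{lem:expectation} (Lemma~3 in the text). The cross term $\Pi_t[\Lambda_k^i]M_k\tilde Z_{k,0}X_i^T$ has zero conditional mean. The reason is that $\tilde R_{k,0}^{i,l}$ is built from samples other than $l$, so it is independent of $(X_k^l,X_i^l)$ and unbiased for $\operatorname{Cov}(X_k,X_i)$. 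Hence the update of $M_i$ is a Robbins--Monro recursion for the strongly convex quadratic $\|M_i-A_i^*\|^2$ with martingale-difference noise $\xi_i(t)$.

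\textbf{Step 2: almost-sure convergence of $M_i(t)$.} Let $V_t=\|M_i(t)-A_i^*\|^2$. Then
\[
\mathbb{E}\{V_{t+1}\mid\mathcal{F}_t\}\le(1-4\varepsilon(t))V_t+\varepsilon^2(t)\,\mathbb{E}\{\|\boldsymbol{g}_{M_i}(t)\|^2\mid\mathcal{F}_t\}.
\]
By Lemma~\ref{lem:variance} and the finite fourth moments, the conditional second moment is bounded by $C(1+V_t)(1+\rho^2(t)\sum_k\|M_k(t)\|^2)$. The projection is essential here: without it, $\Lambda_k^i$ could be unbounded. To handle the coupling through $\|M_k(t)\|^2$, I will sum the Lyapunov functions over all agents, $V=\sum_i V_i$. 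This gives
\[
\mathbb{E}\{V_{t+1}\mid\mathcal{F}_t\}\le\bigl(1+C\varepsilon^2(t)\rho^2(t)\bigr)V_t-4\varepsilon(t)V_t+C\varepsilon^2(t)\rho^2(t).
\]
The Robbins--Siegmund theorem, together with $\sum\varepsilon^2\rho^2<\infty$ and $\sum\varepsilon=\infty$, then yields $V_t\to0$ a.s. Moreover $\sum_t\varepsilon(t)V_t<\infty$ a.s., which I will reuse in Step 3.

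\textbf{Step 3: the coordination layers.} Lemma~\ref{lem:expectation} gives that $\boldsymbol{g}_{\Lambda_j^i}$ is unbiased for the gradient of the convex quadratic
\[
\Lambda^i\mapsto L_i(\boldsymbol{M}(t),\Lambda^i),
\]
because the two independent covariance estimates (from $B_t^0$ and $B_t^1$) remove the product bias. Its Hessian is block-diagonal with blocks $M_j\Sigma_j M_j^T$, where $\Sigma_j=\mathbb{E}\{Z_j^iZ_j^{iT}\}$, by the uncorrelatedness supplied by Lemma~\ref{lem: schur}. The Lyapunov function I will use is
\[
W_t=L_i(\boldsymbol{M}(t),\boldsymbol{\Lambda}(t))-L_i^{\min}(\boldsymbol{M}(t)),
\]
or alternatively the distance to the minimizer set. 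The one-step expansion contains three pieces:
\begin{enumerate}
\item a descent term $-\eta\|\nabla W\|^2$;
\item a noise term $\eta^2\times(\text{second moment})$;
\item a drift term caused by $M(t)$ moving, of order $\|M(t+1)-M(t)\|$ times $\|\Lambda\|$.
\end{enumerate}
I expect to bound the drift by Cauchy--Schwarz, splitting it as $\eta^2/\varepsilon$ plus $\varepsilon\|\boldsymbol{g}_M\|^2$-type terms. This is exactly where $\sum\eta^2/\varepsilon<\infty$ enters, together with the summability from Step 2. Robbins--Siegmund again gives that $W_t$ converges and $\sum\eta\|\nabla W\|^2<\infty$. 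Since $\sum\eta=\infty$, we get $\liminf\|\nabla W\|=0$. Because $M(t)\to A^*$ and the limiting quadratic satisfies a Polyak--Łojasiewicz inequality (convex quadratics always do on the range of their Hessian), this upgrades to $W_t\to0$.

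Finally, at $M=A^*$, Assumption~\ref{assumption1} gives $\min_\Lambda L_i=\sigma^2$, which is the global minimum. Continuity of $L_i^{\min}$ in $M$ near $A^*$ then closes the argument.

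\textbf{Main obstacle.} I expect the hardest step to be controlling $\Lambda(t)$, which is not a priori bounded, inside the drift and noise terms of Step 3. The PL constant also depends on $M(t)$ and may degenerate if $A_j^*$ is rank deficient. My first attempt would be to work with the projection of $\Lambda$ onto the range of $A_j^*\Sigma_jA_j^{*T}$. On the complementary null directions, the expected gradient vanishes, so the iterate there only accumulates noise of size $\sum\eta^2$, which is finite.
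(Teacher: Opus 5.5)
\textbf{Verdict: there is a genuine gap in Step~3.} Your Steps~1 and~2 coincide with the paper's argument: Lemma~\ref{lem:expectation}, then Lemma~\ref{lem:variance} in its linear form $O(\rho^2(t)(1+\sum_j\|M_j(t)-A_j^*\|^2))$ rather than the product bound you wrote, then Robbins--Siegmund on $\sum_i\|M_i(t)-A_i^*\|^2$. Step~3 does not close as you set it up.

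\textbf{Why Step~3 fails.} Your $W_t$ is measured against the problem at the \emph{current} $\boldsymbol{M}(t)$. Its drift therefore comes from the increment $M_k(t+1)-M_k(t)=-\varepsilon(t)\boldsymbol{g}_{M_k}(t)$ and carries the fast step $\varepsilon(t)$, not $\eta(t)$. So there is no $\eta$ factor for $\sum_t\eta^2(t)/\varepsilon(t)<\infty$ to act on. Moreover, the piece $\sum_t\varepsilon(t)\|\boldsymbol{g}_{M_k}(t)\|^2$ of your split diverges, since $\boldsymbol{g}_{M_k}$ has a noise floor (e.g.\ from $\nu_kX_k^T$). After conditioning, the first-order drift is of order $\varepsilon(t)\sum_k\|M_k(t)-A_k^*\|\,\|\nabla_{\boldsymbol{M}}W\|$, whereas Step~2 only gives $\sum_t\varepsilon(t)\|M_k(t)-A_k^*\|^2<\infty$. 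Absorbing this drift into a PL descent term would need roughly $\sum_t\varepsilon^3(t)/\eta(t)<\infty$. That fails for $\varepsilon(t)\propto t^{-0.6}$, $\eta(t)\propto t^{-1}$, $\rho(t)\propto t^{0.05}$ (essentially the schedule of Section~\ref{sec:exp-linear}), which meets every hypothesis of Theorem~\ref{thm:main}.

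There are two further problems.
\begin{itemize}
\item The Hessian is not block-diagonal. By \eqref{eq:18} its blocks are $M_kC_{k,j}M_j^T$, with $C_{k,j}=\Cov(Z_k^i,Z_j^i)\neq0$ for $k\neq j$ in general. Lemma~\ref{lem: schur} gives positive definiteness of $[C_{k,j}]$, not uncorrelatedness, and only under an extra hypothesis absent from the theorem.
\item Your fix for the obstacle you flag is circular. Off the limiting range, the expected $\Lambda$-gradient is of order $\sum_k\|M_k(t)-A_k^*\|$ times a factor growing with $\|\boldsymbol{\Lambda}^i(t)\|$. By \eqref{eq:20} the noise variance also grows with $\|\boldsymbol{\Lambda}^i(t)\|^2$. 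So ``only $\sum\eta^2$ of noise'' presupposes the boundedness you are trying to prove.
\end{itemize}

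\textbf{The missing idea: measure progress of $\boldsymbol{\Lambda}^i$ against the \emph{limiting} problem at $M_k=A_k^*$.} The paper fixes $\boldsymbol{\Lambda}^{i*}$ from Assumption~\ref{assumption1} and uses $z_t=\sum_{j\neq i}\|\Lambda_j^i(t)-\Lambda_j^{i*}\|^2$.
\begin{itemize}
\item $\boldsymbol{M}(t)$ then enters only through the deviation of $\mathbb{E}\{\boldsymbol{g}_{\Lambda_j^i}\mid\mathcal{F}_t\}$ from $\nabla_{\Lambda_j^i}f_i$, and this deviation is already multiplied by $\eta(t)$.
\item The split \eqref{eq:35}--\eqref{eq:37} bounds it by $(\|M_k(t)-A_k^*\|+\|M_k(t)-A_k^*\|^2)(1+z_t)$.
\item The two-time-scale condition enters exactly through $\sum_t\eta(t)\|M_k(t)-A_k^*\|\le\sum_t\eta^2(t)/\varepsilon(t)+\sum_t\varepsilon(t)\|M_k(t)-A_k^*\|^2<\infty$.
\item Robbins--Siegmund in the form $(1+\beta_t)z_t+\xi_t-\zeta_t$ then \emph{yields} boundedness of $\boldsymbol{\Lambda}^i(t)$ instead of requiring it.
\item In place of a PL inequality, the first-order convexity bound $\langle\nabla f_i(\boldsymbol{\Lambda}^i),\boldsymbol{\Lambda}^i-\boldsymbol{\Lambda}^{i*}\rangle\ge f_i(\boldsymbol{\Lambda}^i)$ for the limiting objective gives $\sum_t\eta(t)f_i(\boldsymbol{\Lambda}^i(t))<\infty$. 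No curvature constant is involved, so rank deficiency of $A_j^*$ or singularity of $[C_{k,j}]$ is harmless.
\item An Opial-type argument finishes: the distance from $\boldsymbol{\Lambda}^i(t)$ to every point of the solution set $\mathcal{S}_i$ converges (via a countable dense subset). This upgrades $\liminf_t f_i(\boldsymbol{\Lambda}^i(t))=0$ to $\boldsymbol{\Lambda}^i(t)\to\bar{\boldsymbol{\Lambda}}^i\in\mathcal{S}_i$, hence $f_i(\boldsymbol{\Lambda}^i(t))\to0$ and $L_i\to\sigma^2$.
\end{itemize}
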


\section{Convergence Analysis}
In this section, we detail our convergence analysis that leads to the performance guarantees of Theorem \ref{thm:main}. Throughout our analysis, we use the following filtration, which summarizes the past iterates of the algorithm:

\begin{equation}
\mathcal{F}_t=\sigma\left(\left\{ M_{i}\left(t\right)\right\} _{i=1}^{N},\left\{ \Lambda_{j}^{i}\left(t\right)\right\} _{i\neq j},\ldots,\left\{ M_{i}\left(0\right)\right\} _{i=1}^{N},\left\{ \Lambda_{j}^{i}\left(0\right)\right\} _{i \neq j}\right).
\end{equation}

Lemma \ref{lem:expectation} below computes the stochastic gradients used in Algorithm 1.  Naturally, the stochastic gradient of the loss with respect to the embedding parameters $M_i$ depends on the models of other agents, and on the coordination parameters $\boldsymbol{\Lambda}^i$. However, the expected gradients of the loss function of agent $i$ with respect to $M_i$ depend only on $M_i$. This unique fact is crucial to our analysis. In particular, we can project $\Lambda_{j}^{i}\left(t\right)$ into a ball with a controlled radius $\rho\left(t\right)$ without creating any bias in $\boldsymbol{g}_{M_{i}}$. Consequently, we can first show the convergence of $M_i\left(t\right)$, which is then used to prove convergence of $\Lambda_{j}^{i}\left(t\right)$.

\begin{lem} \label{lem:expectation}
    Let $C_{k,j}$ be the covariance matrix between $Z^{i}_k$ and $Z^{i}_j$. Recall that in Algorithm~1, agent $i$ uses, for each sample in the batch, the stochastic gradients
    \begin{equation}
        \boldsymbol{g}_{M_{i}}
        =-2Y_{i}\tilde{Z}_{i,0}^{iT}+2M_{i}\tilde{Z}_{i,0}^{i}\tilde{Z}_{i,0}^{iT}+2\left(\sum_{j=1,j\neq i}^{N}\Pi_t\left[\Lambda_{j}^{i}\left(t\right)\right]M_{j}\tilde{Z}_{j,0}^{i}\right)\tilde{Z}_{i,0}^{iT}\label{eq:15}
    \end{equation}
    and for an agent $j\neq i$
    \begin{equation}        \boldsymbol{g}_{\Lambda_{j}^{i}}
        = -2Y_{i}(M_{j}\tilde{Z}_{j,1}^{i})^{T}+2M_{i}\tilde{Z}_{i,0}^{i}(M_{j}\tilde{Z}_{j,1}^{i})^{T}+2\left(\sum_{k=1,k\neq i}^{N}\Lambda_{k}^{i}M_{k}\tilde{Z}_{k,0}^{i}\right)(M_{j}\tilde{Z}_{j,1}^{i})^{T}.\label{eq:16}
    \end{equation}
   Then, the expectations of the stochastic gradients are given by
    \begin{equation}
    \mathbb{E}\{ \boldsymbol{g}_{M_{i}}\}= 2\left(M_{i}-A_{i}-\sum_{j=1,j\neq i}^{N}B^{i}_{j}\Cov\left(X_{j},X_{i}\right)\right) \label{eq:17}
\end{equation}
and
\begin{equation}
     \mathbb{E}\{\boldsymbol{g}_{\Lambda_{j}^{i}}\}= 2\sum_{k=1,k\neq i}^{N}\left(\Lambda_{k}^{i}M_{k}-B^{i}_{k}\right)C_{k,j}M_{j}^{T}.\label{eq:18}
\end{equation}
\end{lem}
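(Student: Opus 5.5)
We compute both expectations by conditioning on $\mathcal{F}_t$, so that $M_i,M_j,\Lambda_j^i$ and the projected $\Pi_t[\Lambda_j^i]$ are fixed. We then exploit two independence facts about the sample. First, for the $l$-th sample, the leave-one-out estimator $\tilde{R}_{j,0}^{i,l}$ is built from the other $B-1$ samples of $B_t^0$, so it is independent of $(X^l,Y^l)$ with mean $\Cov(X_j,X_i)$. Second, $\tilde{R}_{j,1}^{i}$ comes from $B_t^1$, so it is independent of both $(X^l,Y^l)$ and $\tilde{R}_{k,0}^{i,l}$, again with mean $\Cov(X_j,X_i)$. Because the gradients are at most bilinear in these estimators and never multiply an estimator by a dependent copy of itself, we can replace each estimator by its mean, conditioning on the sample and using the tower property. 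This turns every $\tilde{Z}$ into the exact residual $Z$.

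\textbf{The $M_i$ gradient.} Here $\tilde{Z}_{i,0}^{i}=X_i$. The term $2M_iX_iX_i^T$ has expectation $2M_i$, since $\mathbb{E}\{X_iX_i^T\}=I$. For $j\neq i$ we get
\[
\mathbb{E}\{\tilde{Z}_{j,0}^{i}X_i^T\}=\Cov(X_j,X_i)-\mathbb{E}\{\tilde{R}_{j,0}^{i,l}\}\,\mathbb{E}\{X_iX_i^T\}=0,
\]
using the independence of $\tilde{R}_{j,0}^{i,l}$ and $X_i^l$. So the whole $\Lambda$-dependent term has zero mean, whether or not $\Lambda$ is projected. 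For the label term, substitute \eqref{eq:3} and use that $\nu_i$ is zero-mean and independent of the inputs. This gives $\mathbb{E}\{Y_iX_i^T\}=A_i+\sum_{j\neq i}B^i_j\Cov(X_j,X_i)$, which yields \eqref{eq:17}.

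\textbf{The $\Lambda_j^i$ gradient.} Write $W=M_j\tilde{Z}_{j,1}^{i}$, so the gradient is $-2\big(Y_i-M_iX_i-\sum_k\Lambda_k^iM_k\tilde{Z}_{k,0}^{i}\big)W^T$.
\begin{itemize}
\item First rewrite $Y_i$ in residual coordinates. Decompose $X_k=Z_k^i+\Cov(X_k,X_i)X_i$, which gives $Y_i=A_i^*X_i+\sum_{k\neq i}B^i_kZ_k^i+\nu_i$.
\item Conditioning on the sample and averaging over $\tilde{R}_{j,1}^{i}$ replaces $W$ by $M_jZ_j^i$. Averaging over the independent $\tilde{R}_{k,0}^{i,l}$ then replaces $\tilde{Z}_{k,0}^{i}$ by $Z_k^i$.
\item The noise term vanishes by independence.
\item The $X_i$ terms vanish because $\mathbb{E}\{X_iZ_j^{iT}\}=\Cov(X_i,X_j)-\Cov(X_i,X_j)=0$, using $\Cov(X_j,X_i)^T=\Cov(X_i,X_j)$ and $\mathbb{E}X_iX_i^T=I$.
\item What remains is $2\sum_{k\neq i}(\Lambda_k^iM_k-B_k^i)\,\mathbb{E}\{Z_k^iZ_j^{iT}\}M_j^T$. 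Since all means are zero, $\mathbb{E}\{Z_k^iZ_j^{iT}\}=C_{k,j}$, and this is \eqref{eq:18}.
\end{itemize}

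\textbf{Main obstacle.} The delicate step is justifying the substitution of estimators by their means in the product terms, for example $\mathbb{E}\{\tilde{Z}_{k,0}^{i}\tilde{Z}_{j,1}^{iT}\}$. Expanding this product produces a cross term $\tilde{R}_{k,0}^{i,l}X_iX_i^T\tilde{R}_{j,1}^{iT}$. Its expectation factors only because the two estimators come from disjoint, independent index sets and both are independent of sample $l$. This is exactly why the algorithm uses two batches and leave-one-out estimates. Had the same estimator appeared twice, a covariance-of-estimator term of order $1/B$ would appear and introduce bias. I would verify this factorization carefully, invoking $\mathbb{E}\|X_i\|^4<\infty$ so that all the expectations involved are finite.
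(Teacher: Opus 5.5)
Your proposal is correct and follows essentially the same route as the paper's proof. Both substitute the linear model and use that the leave-one-out estimate $\tilde{R}_{j,0}^{i,l}$, the second-batch estimate $\tilde{R}_{j,1}^{i}$, and the $l$-th sample are mutually independent, so every cross-moment factors (e.g., $\mathbb{E}\{\tilde{Z}_{j,0}^{i}X_i^T\}=0$ and $\mathbb{E}\{\tilde{Z}_{k,0}^{i}\tilde{Z}_{j,1}^{iT}\}=C_{k,j}$). The only cosmetic difference is that you rewrite $Y_i=A_i^*X_i+\sum_{k\neq i}B_k^iZ_k^i+\nu_i$ and replace estimators by their means via the tower property, whereas the paper keeps $Y_i$ in the original coordinates and expands the products term by term.
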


In particular, the fact that the expected gradient with respect to $M_i$ is $2(M_i-A_i^{*})$ suggests that in the absence of noise, $M_i$ will converge to $A_i^{*}$ with no consideration to other agents. This would not be the case if each agent took a gradient step to minimize the global objective in \eqref{eq:7}. The gradient of \eqref{eq:7} with respect to $M_i$ has $N-1$ other terms that push $M_i$ to solve $\Lambda_i^jM_i = B^{j}_{i}$.

The next lemma bounds the variances of our stochastic gradients. We see that $\Lambda_j^{i}$ converges to minimal error only since $M_j$ converges to $A_j^{*}$ for other agents, which indicates that learning is coupled between agents. Nevertheless, the convergence of $\Lambda_j^{i}$ also helps learning $M_i$, since the variance of the stochastic gradient of $M_i$ tends to decrease as the error of $\Lambda_j^{i}$ decreases (see \eqref{eq:21} in the proof of Lemma \ref{lem:variance}). This shows that the learning of $M_i$ and $\Lambda_j^{i}$ is coupled. The decreasing variance of $\boldsymbol{g}_{M_i}$ as $\Lambda_j^{i}$ converges is the reason for working with the local $\tilde{Z}_1^{i},\ldots,\tilde{Z}_N^{i}$ variables, which subtract the influence of $X_i$ from the received variables.

The bounds of Lemma \ref{lem:variance} are worst case and therefore do not show the dependence on the convergence error of $\Lambda_j^{i}$. They also ignore the dependence on the variance of $\tilde{R}_{j}^{i}$ that decreases in the batch size. The proof of Theorem \ref{thm:main} only needs the bounds as stated in Lemma \ref{lem:variance}. Performance in practice, of course, benefits a lot from the decreasing variances of the stochastic gradients. 

\begin{lem}
\label{lem:variance}Let $\Lambda_{j}^{i*}$ be such that $B^{i}_{j}=\Lambda_{j}^{i*}A_{j}^{*}$.
Recall the stochastic gradients $\boldsymbol{g}_{M_{i}}\left(t\right)$ and $\boldsymbol{g}_{\Lambda_{j}^{i}}\left(t\right)$ defined in \eqref{eq:15} and \eqref{eq:16}, for the parameters at iteration $t$. Then for all $i$
\begin{equation}
\mathbb{E}\{\left\Vert \boldsymbol{g}_{M_{i}}\left(t\right)\right\Vert ^{2}|\mathcal{F}_{t}\}=O\left(\rho^{2}\left(t\right)\left(1+\sum_{j=1}^{N}\left\Vert M_{j}\left(t\right)-A_{j}^{*}\right\Vert ^{2}\right)\right)\label{eq:19}
\end{equation}
and for all $i,j$, understanding that $\Lambda_{i}^{i}=0$,
\begin{equation}
\mathbb{E}\left\{ \left\Vert \boldsymbol{g}_{\Lambda_{j}^{i}}\left(t\right)\right\Vert ^{2}|\mathcal{F}_{t}\right\} =O\left(\sum_{l=1}^{N}\left(1+\left\Vert \Lambda_{l}^{i}\left(t\right)\right\Vert ^{2}\right)\sum_{k=1}^{N}\left(1+\left\Vert M_{k}\left(t\right)-A_{k}^{*}\right\Vert ^{2}\right)^{2}\right).\label{eq:20}
\end{equation}
\end{lem}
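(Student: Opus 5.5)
Because both bounds are second moments of quadratic forms, I will write each stochastic gradient as a sum of a few terms and apply $\|\sum_{r=1}^R a_r\|^2\le R\sum_{r=1}^R\|a_r\|^2$. Each piece is then controlled by $\mathbb{E}\{\|X_k\|^4\}<\infty$ and Cauchy--Schwarz. Conditional on $\mathcal{F}_t$, the parameters $M_j(t),\Lambda_j^i(t)$ are fixed, and the fresh batches $B_t^0,B_t^1$ are independent of $\mathcal{F}_t$. So every conditional expectation becomes an expectation over data only, multiplied by norms of parameters.

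The first step is to reparametrize around the optimum. Write $M_j=A_j^*+\Delta_j$, where $\Delta_j=M_j(t)-A_j^*$. Substitute the model \eqref{eq:3}, i.e. $Y_i=A_iX_i+\sum_{j\neq i}B^i_jX_j+\nu_i$, into \eqref{eq:15} and \eqref{eq:16}. Every term is then linear in the data-dependent random matrices $X_k$, $\tilde Z^i_{k,0}$, $\tilde Z^i_{k,1}$, $\nu_i$, with coefficients built from fixed matrices and from $\Delta_j$, $\Lambda^i_j$, $\Pi_t[\Lambda^i_j]$.

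Next, I will establish a moment lemma. Each $\tilde Z^{i}_{k,0}$ and $\tilde Z^i_{k,1}$ equals $X_k-\tilde R X_i$, where $\tilde R$ is an average of $X_k^{l'}X_i^{l'T}$ over other, independent samples. Hence, by independence, $\mathbb{E}\{\|\tilde Z\|^4\}$ is at most a constant depending on $B$ and the fourth moments. Likewise, any product of two such factors, for example $\tilde Z^i_{k,0}\tilde Z^{iT}_{i,0}$ or $\tilde Z^i_{k,0}(\tilde Z^i_{j,1})^T$, has a finite second moment, by Cauchy--Schwarz and the fourth-moment assumption. The noise $\nu_i$ is independent of the inputs and has finite variance. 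Products of $\nu_i$ with the inputs therefore also have finite second moments, and they need only $\mathbb{E}\{\|X\|^2\}$.

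For \eqref{eq:19}, I group $\boldsymbol{g}_{M_i}$ as three terms:
\begin{itemize}
\item a term independent of the parameters, $-2Y_iX_i^T+2A_i^*X_iX_i^T$, which has bounded second moment;
\item $2\Delta_iX_iX_i^T$, which contributes $O(\|\Delta_i\|^2)$;
\item $2\sum_{j\neq i}\Pi_t[\Lambda^i_j](A_j^*+\Delta_j)\tilde Z^i_{j,0}X_i^T$, which contributes $O(\rho^2(t)(1+\|\Delta_j\|^2))$, since $\|\Pi_t[\Lambda]\|\le\rho(t)$.
\end{itemize}
For $\rho(t)\ge1$, which holds eventually because $\rho(t)\to\infty$, all three terms fit under the bound $\rho^2(1+\sum_j\|\Delta_j\|^2)$.

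For \eqref{eq:20}, every term of $\boldsymbol{g}_{\Lambda^i_j}$ carries the factor $(M_j\tilde Z^i_{j,1})^T$. The three pieces of the gradient are handled as follows.
\begin{itemize}
\item The $Y_i$ and $M_i$ pieces have coefficient of order $(1+\|\Delta_i\|)(1+\|\Delta_j\|)$.
\item The sum over $k$ has coefficient $\|\Lambda^i_k\|(1+\|\Delta_k\|)(1+\|\Delta_j\|)$.
\end{itemize}
Squaring and using the moment lemma gives $(1+\|\Lambda^i_k\|^2)(1+\|\Delta_k\|^2)(1+\|\Delta_j\|^2)$. Applying $ab\le a^2+b^2$ to the $\Delta$ factors bounds this by $\sum_l(1+\|\Lambda_l^i\|^2)\sum_k(1+\|\Delta_k\|^2)^2$.

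The main obstacle is the dependence structure. The estimators $\tilde R^{i,l}_{j,0}$ reuse samples of the same batch $B_t^0$, so the terms within a batch are not independent. I will handle this crudely. Bound the batch-averaged gradient by Jensen, which reduces it to per-sample terms. Then bound each per-sample fourth-order moment by Hölder, using that $\tilde R$ is a finite average of products of independent copies. This introduces only $B$-dependent constants, which the $O(\cdot)$ absorbs. Since no exact cancellation is needed, the bounds require only finite fourth moments of $X$ and finite variance of $\nu$, never higher moments.
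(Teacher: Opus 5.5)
Your decomposition and bookkeeping are fine, but your moment lemma is false as stated, and both \eqref{eq:19} and \eqref{eq:20} rest on it. You claim that $\mathbb{E}\{\|\tilde Z^i_{k,0}\|^4\}$ is bounded by a constant that depends only on $B$ and the fourth moments of the inputs. Take the scalar case. Since $\tilde R^{i,l}_{k,0}$ is independent of the $l$-th sample, the fourth moment of $\tilde R^{i,l}_{k,0}X_i$ equals $\mathbb{E}\{|\tilde R^{i,l}_{k,0}|^4\}\,\mathbb{E}\{|X_i|^4\}$. Finiteness of $\mathbb{E}\{|\tilde R^{i,l}_{k,0}|^4\}$ requires $\mathbb{E}\{X_k^4X_i^4\}<\infty$, which is an eighth-order moment.

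Concretely, let $d=1$ and $X_k=cX_i+\sqrt{1-c^2}\,W$, where $0<c<1$ and $W$ is an independent Rademacher variable. Let $X_i$ have a finite fourth moment but an infinite eighth moment. Every assumption of the paper holds, yet $\mathbb{E}\{|\tilde Z^i_{k,0}|^4\}=\infty$. So the Cauchy--Schwarz/H\"older step you use to get finite second moments of $\tilde Z^i_{k,0}X_i^T$, $\tilde Z^i_{k,0}(\tilde Z^i_{j,1})^T$ and $X_i(\tilde Z^i_{j,1})^T$ needs more than $\mathbb{E}\{\|X_i\|^4\}<\infty$. This contradicts your closing claim that no higher moments are used.

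Those second moments are nonetheless finite under the paper's assumptions; the fix is never to take a fourth moment of $\tilde Z$. Inside each product, expand $\tilde Z^i_{k,0}=X_k-\tilde R^{i,l}_{k,0}X_i$ and $\tilde Z^i_{j,1}=X_j-\tilde R^i_{j,1}X_i$. Then factor the expectations using the mutual independence of three objects: $\tilde R^{i,l}_{\cdot,0}$ (leave-one-out in $B_t^0$), $\tilde R^i_{\cdot,1}$ (from $B_t^1$), and the $l$-th sample. For example, $\mathbb{E}\{\|\tilde R^{i,l}_{k,0}X_iX_i^T(\tilde R^{i}_{j,1})^T\|^2\}\le\mathbb{E}\{\|\tilde R^{i,l}_{k,0}\|^2\}\,\mathbb{E}\{\|X_i\|^4\}\,\mathbb{E}\{\|\tilde R^{i}_{j,1}\|^2\}$. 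By Jensen and Cauchy--Schwarz, $\mathbb{E}\{\|\tilde R^{i,l}_{k,0}\|^2\}\le\mathbb{E}\{\|X_k\|^2\|X_i\|^2\}\le(\mathbb{E}\{\|X_k\|^4\}\,\mathbb{E}\{\|X_i\|^4\})^{1/2}$. This is the independence the paper invokes in step (b) of its bound on $\mathbb{E}\{\|\boldsymbol{g}_{M_i}(t)\|^2|\mathcal{F}_t\}$.

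With this repair your argument is complete, and it is somewhat simpler than the paper's. You keep $-2Y_iX_i^T+2A_i^*X_iX_i^T$ as a parameter-free block and bound the coordination term directly by $\rho^2(t)\|M_j(t)\|^2$. The paper instead substitutes $X_j=\tilde Z^i_{j,0}+\tilde R^{i,l}_{j,0}X_i$ to regroup the gradient around $\tilde A_i-M_i(t)$ and $B^i_j-\Pi_t[\Lambda^i_j]M_j(t)$. That regrouping is not needed for the $O(\cdot)$ statements. Its benefit is that it shows how the gradient variance shrinks as the coordination error vanishes, which the paper uses in its later discussion of the variance error floor.
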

\begin{proof}
Let $\tilde{R}_{j}^{i}=\frac{1}{B-1}\sum_{k\neq l}{X}_{j}^{k}{X}_{i}^{kT}$ be the estimated covariance matrix based on the batch samples. Let $\tilde{A}_{i}=A_{i}+\sum_{j\neq i}B^{i}_{j}\tilde{R}_{j}^{i}$, which approximates $A_i^{*}$. For sufficiently large constants $K_{0},\ldots,K_{10}>0$
\begin{multline}
\frac{1}{4}\mathbb{E}\{\left\Vert \boldsymbol{g}_{M_{i}}\left(t\right)\right\Vert ^{2}|\mathcal{F}_{t}\}=\\ \mathbb{E}\Big\{\Big\Vert\left(A_{i}-M_{i}\left(t\right)\right)X_{i}X_{i}^{T}+\sum_{j\neq i}\left(B_{j}^{i}X_{j}-\Pi_t\left[\Lambda_{j}^{i}\left(t\right)\right]M_{j}\left(t\right)\tilde{Z}_{j,0}^{i}\right)X_{i}^{T}+\nu_iX_{i}^{T}\Big\Vert^{2}\ \Big|\ \mathcal{F}_{t}\Big\}\underset{\left(a\right)}{=} \\
\mathbb{E}\Big\{\Big\Vert\left(A_{i}-M_{i}\left(t\right)\right)X_{i}X_{i}^{T}+\sum_{j\neq i}\left(B_{j}^{i}\left(\tilde{Z}_{j,0}^{i}+\tilde{R}_{j}^{i}X_{i}\right)-\Pi_t\left[\Lambda_{j}^{i}\left(t\right)\right]M_{j}\left(t\right)\tilde{Z}_{j,0}^{i}\right)X_{i}^{T}+\nu_iX_{i}^{T}\Big\Vert^{2}\ \Big|\ \mathcal{F}_{t}\Big\}= \\
\mathbb{E}\Big\{\Big\Vert\left(A_{i}+\sum_{j\neq i}B_{j}^{i}\tilde{R}_{j}^{i}-M_{i}\left(t\right)\right)X_{i}X_{i}^{T}+\sum_{j\neq i}\left(B_{j}^{i}-\Pi_t\left[\Lambda_{j}^{i}\left(t\right)\right]M_{j}\left(t\right)\right)\tilde{Z}_{j,0}^{i}X_{i}^{T}+\nu_iX_{i}^{T}\Big\Vert^{2}\ \Big|\ \mathcal{F}_{t}\Big\}\leq \\
3\mathbb{E}\left\{ \left\Vert \nu_iX_{i}^{T}\right\Vert ^{2}\right\} +3\mathbb{E}\left\{ \left\Vert M_{i}\left(t\right)-\tilde{A}_{i}\right\Vert ^{2}\left\Vert X_{i}X_{i}^{T}\right\Vert ^{2}\vert\mathcal{F}_{t}\right\} \\ +3N\mathbb{E}\left\{ \sum_{j\neq i}\left\Vert B_{j}^{i}-\Pi_t\left[\Lambda_{j}^{i}\left(t\right)\right]M_{j}\left(t\right)\right\Vert ^{2}\left\Vert \tilde{Z}_{j,0}^{i}X_{i}^{T}\right\Vert ^{2}\vert\mathcal{F}_{t}\right\} \underset{\left(b\right)}{\leq}  \\
3\mathbb{E}\left\{ \left\Vert \nu_i\right\Vert ^{2}\right\} \mathbb{E}\left\{ \left\Vert X_{i}\right\Vert ^{2}\right\} +3\mathbb{E}\left\{ \left\Vert M_{i}\left(t\right)-\tilde{A}_{i}\right\Vert ^{2}\right\} \mathbb{E}\left\{ \left\Vert X_{i}X_{i}^{T}\right\Vert ^{2}\right\} \\ +3N\sum_{j\neq i}\left\Vert B_{j}^{i}-\Pi_t\left[\Lambda_{j}^{i}\left(t\right)\right]M_{j}\left(t\right)\right\Vert ^{2}\mathbb{E}\left\{ \left\Vert \tilde{Z}_{j,0}^{i}X_{i}^{T}\right\Vert ^{2}\right\} \underset{\left(c\right)}{\leq} \\
K_{0}\sigma^{2}+K_{6}\mathbb{E}\left\{ \left\Vert M_{i}\left(t\right)-\tilde{A}_{i}\right\Vert ^{2}\right\} +K_{1}\sum_{j\neq i}\left(\left\Vert B_{j}^{i}-\Pi_t\left[\Lambda_{j}^{i}\left(t\right)\right]A_{j}^{*}\right\Vert ^{2}+\left\Vert \Pi_t\left[\Lambda_{j}^{i}\left(t\right)\right]\left(A_{j}^{*}-M_{j}\left(t\right)\right)\right\Vert ^{2}\right)\leq \\
K_{0}\sigma^{2}+2K_{6}\left\Vert M_{i}\left(t\right)-A_{i}^{*}\right\Vert ^{2}+2K_{6}\mathbb{E}\left\{ \left\Vert \tilde{A}_{i}-A_{i}^{*}\right\Vert ^{2}\right\} \\ +K_{1}\sum_{j\neq i}\left(\left\Vert B_{j}^{i}-\Pi_t\left[\Lambda_{j}^{i}\left(t\right)\right]A_{j}^{*}\right\Vert ^{2}+\left\Vert \Pi_t\left[\Lambda_{j}^{i}\left(t\right)\right]\right\Vert ^{2}\left\Vert A_{j}^{*}-M_{j}\left(t\right)\right\Vert ^{2}\right)\label{eq:21}
\end{multline}
where (a) uses that $\tilde{Z}_{j,0}^{i}=X_{j}-\tilde{R}_{j}^{i}X_{i}$, (b) uses that $\nu_i$ is independent of $X_i$ and that $\tilde{R}^i_j$ is independent of the $l$-th sample, and (c) uses that $\mathbb{E}\left\{ \left\Vert X_{i}X_{i}^{T}\right\Vert ^{2}\right\} $
and $\mathbb{E}\left\{ \left\Vert \tilde{Z}_{j,0}^{i}X_{i}^{T}\right\Vert ^{2}\right\} $
are bounded.

Similarly, for the stochastic gradient with respect to $\Lambda_j^{i}$, 
\begin{multline}
\frac{1}{4}\mathbb{E}\left\{ \left\Vert \boldsymbol{g}_{\Lambda_{j}^{i}}\left(t\right)\right\Vert ^{2}|\mathcal{F}_{t}\right\} =\\ \mathbb{E}\left\{ \left\Vert \left(\left(M_{i}\left(t\right)-A_{i}\right)X_{i}+\sum_{k\neq i}\left(\Lambda_{k}^{i}\left(t\right)M_{k}\left(t\right)\tilde{Z}_{k,0}^{i}-B_{k}^{i}X_{k}\right)-\nu_i\right)\tilde{Z}_{j,1}^{iT}M_{j}^{T}\left(t\right)\right\Vert ^{2}|\mathcal{F}_{t}\right\} \leq \\
3\mathbb{E}\left\{ \left\Vert \nu_i\tilde{Z}_{j,1}^{iT}M_{j}^{T}\left(t\right)\right\Vert ^{2}\right\} +3\mathbb{E}\left\{ \left\Vert \left(M_{i}\left(t\right)-\tilde{A}_{i}\right)X_{i}\tilde{Z}_{j,1}^{iT}M_{j}^{T}\left(t\right)\right\Vert ^{2}|\mathcal{F}_{t}\right\} 
\\ +3N\sum_{k\neq i}\mathbb{E}\left\{ \left\Vert \left(\Lambda_{k}^{i}\left(t\right)M_{k}\left(t\right)-B_{k}^{i}\right)\tilde{Z}_{k,0}^{i}\tilde{Z}_{j,1}^{iT}M_{j}^{T}\left(t\right)\right\Vert ^{2}|\mathcal{F}_{t}\right\} \leq \\
3\mathbb{E}\left\{ \left\Vert \nu_i\tilde{Z}_{j,1}^{iT}M_{j}^{T}\left(t\right)\right\Vert ^{2}\right\} +3\mathbb{E}\left\{ \left\Vert M_{i}\left(t\right)-\tilde{A}_{i}\right\Vert ^{2}\right\} \left\Vert M_{j}\left(t\right)\right\Vert ^{2}\mathbb{E}\left\{ \left\Vert X_{i}\tilde{Z}_{j,1}^{iT}\right\Vert ^{2}\right\} 
\\ +3N\sum_{k\neq i}\left\Vert \Lambda_{k}^{i}\left(t\right)M_{k}\left(t\right)-B_{k}^{i}\right\Vert ^{2}\left\Vert M_{j}\left(t\right)\right\Vert ^{2}\mathbb{E}\left\{ \left\Vert \tilde{Z}_{k,0}^{i}\tilde{Z}_{j,1}^{iT}\right\Vert ^{2}\right\} \underset{\left(a\right)}{\leq} \\
K_{0}\sigma^{2}+K_{2}\left(1+\left\Vert M_{i}\left(t\right)-A_{i}^{*}\right\Vert ^{2}+\mathbb{E}\left\{ \left\Vert \tilde{A}_{i}-A_{i}^{*}\right\Vert ^{2}\right\} \right)\left(\left\Vert M_{j}\left(t\right)-A_{j}^{*}\right\Vert ^{2}+\left\Vert A_{j}^{*}\right\Vert ^{2}\right)
\\ +NK_{3}\sum_{k\neq i}\left(\left\Vert \Lambda_{k}^{i}\left(t\right)A_{k}^{*}-B_{k}^{i}\right\Vert ^{2}+\left\Vert \Lambda_{k}^{i}\left(t\right)\right\Vert ^{2}\left\Vert M_{k}\left(t\right)-A_{k}^{*}\right\Vert ^{2}\right)\left(\left\Vert M_{j}\left(t\right)-A_{j}^{*}\right\Vert ^{2}+\left\Vert A_{j}^{*}\right\Vert ^{2}\right)\leq \\
K_{5}\left(\left\Vert M_{i}\left(t\right)-A_{i}^{*}\right\Vert ^{2}+\left\Vert M_{i}\left(t\right)-A_{i}^{*}\right\Vert ^{2}\left\Vert M_{j}\left(t\right)-A_{j}^{*}\right\Vert ^{2}+\left\Vert M_{j}\left(t\right)-A_{j}^{*}\right\Vert ^{2}+\sigma^{2}+1\right)\\ +
K_{6}\sum_{k\neq i}\left\Vert \Lambda_{k}^{i}\left(t\right)A_{k}^{*}-B_{k}^{i}\right\Vert ^{2}+K_{7}\left\Vert M_{j}\left(t\right)-A_{j}^{*}\right\Vert ^{2}\sum_{k\neq i}\left\Vert \Lambda_{k}^{i}\left(t\right)A_{k}^{*}-B_{k}^{i}\right\Vert ^{2}\\ +
K_{8}\left\Vert M_{j}\left(t\right)-A_{j}^{*}\right\Vert ^{2}\sum_{k\neq i}\left\Vert \Lambda_{k}^{i}\left(t\right)\right\Vert ^{2}\left\Vert M_{k}\left(t\right)-A_{k}^{*}\right\Vert ^{2}
+K_{9}\sum_{k\neq i}\left\Vert \Lambda_{k}^{i}\left(t\right)\right\Vert ^{2}\left\Vert M_{k}\left(t\right)-A_{k}^{*}\right\Vert ^{2}\leq \\ K_{10}\sum_{l=1}^{N}\left(1+\left\Vert \Lambda_{l}^{i}\left(t\right)\right\Vert ^{2}\right)\sum_{k=1}^{N}\left(1+\left\Vert M_{k}\left(t\right)-A_{k}^{*}\right\Vert ^{2}\right)^{2}\label{eq:27}
\end{multline}
where in (a) we used that $\mathbb{E}\left\{ \left\Vert \nu_i\tilde{Z}_{j,1}^{iT}\right\Vert ^{2}\right\} ,\mathbb{E}\left\{ \left\Vert \tilde{Z}_{k,0}^{i}\tilde{Z}_{j,1}^{iT}\right\Vert ^{2}\right\} ,\mathbb{E}\left\{ \left\Vert X_{i}\tilde{Z}_{j,1}^{iT}\right\Vert ^{2}\right\}$
are bounded and 
\begin{equation}
    3\mathbb{E}\left\{ \left\Vert \nu_i\tilde{Z}_{j,1}^{iT}M_{j}^{T}\left(t\right)\right\Vert ^{2}\right\} \leq 6\mathbb{E}\left\{ \left\Vert \nu_i\tilde{Z}_{j,1}^{iT}A_{j}^{*T}\right\Vert ^{2}\right\} +6\mathbb{E}\left\{ \left\Vert \nu_i\tilde{Z}_{j,1}^{iT}\right\Vert ^{2}\left\Vert M_{j}^{T}\left(t\right)-A_{j}^{*T}\right\Vert ^{2}\right\}. 
\end{equation}
\end{proof}

The following lemma is not necessary to prove Theorem \ref{thm:main}. However, it sheds more light on how CL works. While Theorem \ref{thm:main} shows that the loss of each agent goes to its global minimum, Lemma \ref{lem: schur} shows that
each effective coordination error $\Lambda_j^i(t)A_j^*-B_j^i$
converges to zero with probability one. Since $\Lambda^i_j(t)$ is bounded and $\rho(t) \to \infty$, we have $\Pi_t\left[\Lambda^i_j(t)\right] = \Lambda^i_j(t)$ for all large $t$. Together with $M_j(t) \to A^*_j$ (Theorem~\ref{thm:main}) and the bound \eqref{eq:21} in the proof of Lemma~\ref{lem:variance}, it implies that the variance of $\boldsymbol{g}_{M_i}(t)$ has a decreasing upper bound that reaches an error floor, determined by the label noise and covariance estimation.

The assumptions of Lemma~\ref{lem: schur} essentially rule out
redundant dimensions in the learning problem. If a nonzero
linear combination of $X_1,\ldots,X_N$ is constant, then some
feature observed by one agent is perfectly determined by features
observed by the others and can therefore be removed without losing
information.

\begin{lem} \label{lem: schur}
Assume that no nonzero linear combination of $X_1,\ldots,X_N$ is a constant with probability 1. Then in CL, for each $i,j$ ($i\neq j$), $\ensuremath{\left\Vert \ensuremath{\Lambda_{j}^{i}\left(t\right)A_{j}^{*}-B_{j}^{i}}\right\Vert } \rightarrow 0$ with probability 1 as $t\rightarrow \infty$.
\end{lem}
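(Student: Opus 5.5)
We would treat Lemma~\ref{lem: schur} as a consequence of the two-time-scale structure. Theorem~\ref{thm:main} gives $M_k(t)\to A_k^*$ almost surely, so the $\Lambda$-recursion for agent $i$ is eventually a perturbed SGD on a fixed convex quadratic. Collect $\boldsymbol{\Lambda}^i=(\Lambda_k^i)_{k\neq i}$ and define the limiting objective
\[
F_i(\boldsymbol{\Lambda}^i)=\mathbb{E}\Big\{\Big\Vert\sum_{k\neq i}\left(\Lambda_k^iA_k^*-B_k^i\right)Z_k^i\Big\Vert^2\Big\}
=\sum_{k,j\neq i}\Tr\!\left(\left(\Lambda_k^iA_k^*-B_k^i\right)C_{k,j}\left(\Lambda_j^iA_j^*-B_j^i\right)^T\right).
\]
By \eqref{eq:18}, its gradient in $\Lambda_j^i$ coincides with $\mathbb{E}\{\boldsymbol{g}_{\Lambda_j^i}\}$ evaluated at $M_k=A_k^*$. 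The key algebraic step is that the block matrix $C=[C_{k,j}]_{k,j\neq i}$ is positive definite. The vector $(Z_k^i)_{k\neq i}$ is obtained from $(X_1,\dots,X_N)$ by an invertible linear map, so it is the Schur complement of the $X_i$ block in the joint covariance of $(X_1,\ldots,X_N)$; this is presumably the source of the lemma's name. That joint covariance is positive definite because no nonzero linear combination of the $X_k$ is constant. Consequently $F_i(\boldsymbol{\Lambda}^i)\ge\lambda_{\min}(C)\sum_{k\neq i}\Vert\Lambda_k^iA_k^*-B_k^i\Vert^2$, and the zero set of the errors, which is nonempty by Assumption~\ref{assumption1}, is exactly the minimizer set of $F_i$.

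Next we write the update as $\Lambda_j^i(t+1)=\Lambda_j^i(t)-\eta(t)\left(\nabla_{\Lambda_j^i}F_i(\boldsymbol{\Lambda}^i(t))+b_j(t)+\xi_j(t)\right)$. Here $\xi_j(t)$ is the martingale-difference noise. The bias is
\[
b_j(t)=\mathbb{E}\{\boldsymbol{g}_{\Lambda_j^i}\mid\mathcal{F}_t\}-\nabla_{\Lambda_j^i}F_i,
\]
which by \eqref{eq:18} is bounded by $O\big((1+\Vert\boldsymbol{\Lambda}^i\Vert)\sum_k\Vert M_k(t)-A_k^*\Vert\big)$. As a Lyapunov function we take $V(t)=\sum_{j\neq i}\Vert\Lambda_j^i(t)-\Lambda_j^{i*}\Vert^2$. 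Expanding it gives
\[
\mathbb{E}\{V(t+1)\mid\mathcal{F}_t\}\le V(t)-2\eta(t)\langle\boldsymbol{\Lambda}^i-\boldsymbol{\Lambda}^{i*},\nabla F_i\rangle+2\eta(t)\Vert\boldsymbol{\Lambda}^i-\boldsymbol{\Lambda}^{i*}\Vert\Vert b(t)\Vert+\eta^2(t)\,\mathbb{E}\{\Vert\boldsymbol{g}\Vert^2\mid\mathcal{F}_t\}.
\]
Convexity gives $\langle\boldsymbol{\Lambda}^i-\boldsymbol{\Lambda}^{i*},\nabla F_i\rangle\ge F_i\ge0$. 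Lemma~\ref{lem:variance}, \eqref{eq:20}, bounds the last term by $\eta^2(t)K(1+V(t))$ on the almost-sure event where the $M_k(t)$ are bounded. The bias term is at most $\eta(t)\delta(t)(1+V(t))$ with $\delta(t)=O(\sum_k\Vert M_k(t)-A_k^*\Vert)$. Provided $\sum_t\eta(t)\delta(t)<\infty$ almost surely, Robbins--Siegmund yields that $V(t)$ converges, so $\boldsymbol{\Lambda}^i$ stays bounded, and that $\sum_t\eta(t)F_i(\boldsymbol{\Lambda}^i(t))<\infty$. Combined with $\sum\eta(t)=\infty$ this gives $\liminf F_i=0$. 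To upgrade to $\lim F_i=0$ we use boundedness of $\boldsymbol{\Lambda}^i$ and the fact that increments are small: $\Vert\boldsymbol{\Lambda}^i(t+1)-\boldsymbol{\Lambda}^i(t)\Vert=O(\eta(t))$ plus a summable martingale part. With the standard oscillation argument, $F_i$ cannot cross up and down infinitely often. The positive-definiteness bound above then gives $\Vert\Lambda_j^i(t)A_j^*-B_j^i\Vert\to0$.

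\textbf{Main obstacle.} The hard part is controlling the bias, namely showing $\sum_t\eta(t)\Vert M(t)-A^*\Vert<\infty$ almost surely. Theorem~\ref{thm:main} by itself only gives $M(t)\to A^*$, with no rate. We would first try to extract a rate from the $M$-recursion. Since $\mathbb{E}\{\boldsymbol{g}_{M_i}\mid\mathcal{F}_t\}=2(M_i-A_i^*)$, setting $U(t)=\Vert M(t)-A^*\Vert^2$ gives $\mathbb{E}\{U(t+1)\mid\mathcal{F}_t\}\le(1-4\varepsilon(t)+O(\varepsilon^2\rho^2))U(t)+O(\varepsilon^2(t)\rho^2(t))$. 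A weighted Robbins--Siegmund argument on this recursion should yield $\sum_t\varepsilon(t)U(t)<\infty$. By Cauchy--Schwarz, $\sum_t\eta(t)\sqrt{U(t)}\le\big(\sum_t\eta^2(t)/\varepsilon(t)\big)^{1/2}\big(\sum_t\varepsilon(t)U(t)\big)^{1/2}<\infty$, which is exactly where the time-scale condition $\sum_t\eta^2(t)/\varepsilon(t)<\infty$ enters. If this fails, the fallback is an ODE/asymptotic-pseudotrajectory argument in which the bias vanishes, with boundedness of $\boldsymbol{\Lambda}^i$ obtained separately.
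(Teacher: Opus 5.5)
Your proposal is correct and is essentially the paper's argument. The paper obtains positive definiteness of $C_1$ as the Schur complement of $R_{1,1}=I$ in the full covariance, which gives $f_i\ge\lambda_{\min}(C_1)\sum_{j\neq i}\Vert\Lambda_j^iA_j^*-B_j^i\Vert^2$, and takes $f_i(\boldsymbol{\Lambda}^i(t))\to 0$ from the proof of Theorem~\ref{thm:main}, which resolves your ``main obstacle'' exactly as you suggest: plain Robbins--Siegmund on the $M$-recursion gives $\sum_t\varepsilon(t)\sum_i\Vert M_i(t)-A_i^*\Vert^2<\infty$, and $\sum_t\eta^2(t)/\varepsilon(t)<\infty$ then makes $\sum_t\eta(t)\Vert M_i(t)-A_i^*\Vert$ finite. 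The only real difference is the $\liminf\to\lim$ step: the paper applies Robbins--Siegmund to a countable dense subset of the zero set of $f_i$, so $\Vert\boldsymbol{\Lambda}^i(t)-\boldsymbol{\Lambda}^{i*}\Vert$ converges for every zero and the iterates converge to a zero, whereas you use an oscillation argument; both work, and the paper's gives the slightly stronger convergence of the iterates themselves.
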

\begin{proof}
Let $f_{i}\left(\Lambda_{1}^{i},\ldots,\Lambda_{N}^{i}\right)\triangleq\mathbb{E}\left\{ \left\Vert \sum_{k=1,k\neq i}^{N}\left(\Lambda_{k}^{i}A_{k}^{*}-B_{k}^{i}\right)Z_{k}^{i}\right\Vert ^{2}\right\}$.  Without loss of generality, let $i=1$, and define
\[
    \Delta(t)
    =
    \begin{pmatrix}
        \Lambda_2^1(t)A_2^*-B_2^1
        & \cdots &
        \Lambda_N^1(t)A_N^*-B_N^1
    \end{pmatrix}.
\]
Let $C_1$ be the block covariance matrix of
$(Z_2^1,\ldots,Z_N^1)$, whose $(k,j)$ block is $C_{k,j}$.
Then

\begin{equation}
    f_1\!\left(
        \Lambda_1^1(t),\ldots,\Lambda_N^1(t)
    \right)
    =
    \mathbb{E}\left\|
        \sum_{k=2}^N
        \left(
            \Lambda_k^1(t)A_k^*-B_k^1
        \right)Z_k^1
    \right\|^2 \\
    =
    \operatorname{Tr}\left(
        \Delta(t)C_1\Delta^T(t)
    \right).
\end{equation}

Let $R_{j,k}$ be the covariance matrix of $X_j$ and $X_k$. Define 
\begin{equation}
R_1=\left(\begin{array}{c|c}
\underbrace{R_{1,1}}_{D} & \underbrace{\begin{pmatrix} R_{1,2} & \cdots & R_{1,N} \end{pmatrix}}_{B} \\
\hline
\underbrace{\begin{pmatrix} R_{2,1} \\ \vdots \\ R_{N,1} \end{pmatrix}}_{B^{T}} & \underbrace{ \begin{pmatrix} R_{2,2} & \cdots & R_{2,N} \\ \vdots & \ddots & \vdots \\ R_{N,2} & \cdots & R_{N,N} \end{pmatrix} }_{U}
\end{array}\right). \label{eq:26a}
\end{equation}

The Schur complement of $D$ in $R_1$
is $R_1 / D=U-B^{T}D^{-1}B$, which equals $C_1$ since $D=I$. Let $X$ be the vector that stacks $X_1,\ldots,X_N$. If $R_1$ is not full-rank, then there exists $u\neq\boldsymbol{0}$ such that $R_1u=0$. However, the variance of $u^{T}X$ is $u^{T}R_1u=0$, so $u^{T}X$ is a constant with probability 1, contradicting the assumption. Therefore, $R_1$ is positive definite, implying that its Schur complement $C_1$ is also positive definite. 

Therefore,
\[
\begin{aligned}
    f_1\!\left(
        \Lambda_1^1(t),\ldots,\Lambda_N^1(t)
    \right)
    &=
    \operatorname{Tr}\left(
        \Delta(t)C_1\Delta^T(t)
    \right) \\
    &\geq
    \lambda_{\min}(C_1)\|\Delta(t)\|_F^2 
    =
    \lambda_{\min}(C_1)
    \sum_{j=2}^N
    \left\|
        \Lambda_j^1(t)A_j^*-B_j^1
    \right\|_F^2.
\end{aligned}
\]
The proof of Theorem~2 shows that $f_1\!\left(
        \Lambda_1^1(t),\ldots,\Lambda_N^1(t)
    \right)
    \rightarrow 0$ with probability one. Since $\lambda_{\min}(C_1)>0$, it follows that
\[
    \left\|
        \Lambda_j^1(t)A_j^*-B_j^1
    \right\|_F
    \longrightarrow 0
\]
for every $j\ne1$. The same argument applies to every agent $i$.

\end{proof}

\newsavebox{\secsixbox}
\newcommand{\secsixfit}[1]{\sbox{\secsixbox}{#1}%
  \ifdim\wd\secsixbox>\linewidth\resizebox{\linewidth}{!}{\usebox{\secsixbox}}\else\usebox{\secsixbox}\fi}
  
\section{Experiments}
\label{sec:experiments}

In this section, we test CL numerically. The first set of simulations
uses our linear theoretical model, assumed in Theorem~2. The second set goes beyond the linear
model and runs CL with deep neural networks that classify the digits of a composite
MNIST image.

\subsection{Linear Model Simulations}
\label{sec:exp-linear}

\paragraph{Input Data Generation.}
We simulated $N \in \{8, 16, 32\}$ agents. Agent $i$ has an input $X_i \in \mathbb{R}^d$ and a
label $Y_i \in \mathbb{R}^m$, with $d = 2$ and $m = 1$. As assumed in Section~3.1, the inputs have zero mean and are whitened, so that $\mathbb{E}\{X_i X_i^T\} = I$. The inputs of different agents
are correlated coordinate by coordinate. For each coordinate $k = 1, \ldots, d$, the vector
$(X_{1,k}, \ldots, X_{N,k})$ has a correlation matrix $R \in \mathbb{R}^{N \times N}$, and different
coordinates are independent, so that $\mathrm{Cov}(X_j, X_i) = R_{ji} I$. In every run we draw a new
matrix $R$ with unit diagonal and off-diagonal entries $(U_{ji} + U_{ij})/2$, where the entries of
$U$ are drawn uniformly from $[-5/N, 5/N]$. Such a matrix is not always positive semidefinite. In
that case, we replace it by a nearby correlation matrix: we raise its eigenvalues to at least
$10^{-12}$ and rescale it to a unit diagonal. This was needed in about 60\% of the runs with $N = 8$,
in less than 10\% of the runs with $N = 16$, and in none with $N = 32$.

We use two input distributions. In the first, the inputs are Gaussian with correlation matrix $R$.
In the second, the inputs have Laplace marginals: we draw Gaussian inputs as before and map every
coordinate $g$ to $F^{-1}(\Phi(g))$, where $\Phi$ is the standard normal distribution function and
$F$ is that of the Laplace distribution with unit variance. The coordinates of each agent stay
independent with unit variance, but their correlations across agents change. For Laplacian inputs, $\mathrm{Cov}(X_j, X_i) = \tilde{R}_{ji} I$, where $\tilde{R}_{ji}$ is the correlation of the transformed coordinates, which we compute from $R_{ji}$ by numerical integration. In the rest of this section, $\mathrm{Cov}(X_j, X_i)$ refers to this matrix for Laplacian inputs.

\paragraph{Output Generation.}
The entries of $A_i \in \mathbb{R}^{m \times d}$ and $\Lambda^{i*}_j \in \mathbb{R}^{m \times m}$
are drawn independently from $\mathcal{N}(0, 0.5^2)$ for Gaussian inputs and from a Laplace
distribution with scale $0.4$ for Laplacian inputs. To ensure that the fixed-point iteration below
is a contraction, the coordination matrices of each agent $i$ are scaled by
\[
  s_i = \min\Bigl\{1, \frac{\kappa}{\gamma_i}\Bigr\}, \qquad
  \gamma_i = \sum_{j \neq i} \bigl\|\Lambda^{i*}_j\bigr\|_2 \bigl\|\mathrm{Cov}(X_j, X_i)\bigr\|_2,
\]
with $\kappa = 0.8$, where $\|\cdot\|_2$ is the spectral norm. We then compute $A^*_i$ as the fixed point of the Picard
iteration
\[
  A^*_i \leftarrow A_i + \sum_{j \neq i} \Lambda^{i*}_j A^*_j \mathrm{Cov}(X_j, X_i),
\]
set $B^i_j = \Lambda^{i*}_j A^*_j$, and generate the labels as
$Y_i = A_i X_i + \sum_{j \neq i} B^i_j X_j$. By construction, Assumption~1 holds and $A^*_i$ is
the optimal linear estimator of $Y_i$ from $X_i$. The labels are noiseless ($\sigma^2 = 0$), so the
global minimum of the loss~(14) of every agent is $0$.

\paragraph{Training Procedure.}
The agents run Algorithm~1. At each iteration, they sample two independent batches
${B}_t^0$ and ${B}_t^1$ of $B = 10$ fresh samples, so the samples are independent
across iterations, as Theorem~2 assumes. The stochastic gradients (10) and (12) are averaged over
the batch. The initial values $M_i(0)$ and $\Lambda^i_j(0)$ are random, with entries drawn from the
same distribution as the entries of $A_i$ and of $\Lambda^{i*}_j$ before scaling. The step
sizes and the projection radius are
\[
  \varepsilon(t) = \frac{L_m}{(t + t_0)^{\alpha}}, \qquad
  \eta(t) = \frac{L_\lambda}{(t + t_0)^{\beta}}, \qquad
  \rho(t) = (t + 1)^{0.05},
\]
with $\alpha = 0.6$, $\beta = 1$ and $t_0 = 100$, and with $L_m$ and $L_\lambda$ such that
$\varepsilon(0) = 0.1$ and $\eta(0) = 0.05$. These sequences satisfy all the conditions of
Theorem~2: $\sum_t \varepsilon(t) = \sum_t \eta(t) = \infty$, $\rho(t) \to \infty$,
$\sum_t \varepsilon^2(t) \rho^2(t) < \infty$ and $\sum_t \eta^2(t)/\varepsilon(t) < \infty$.
Isolated agents minimize~(4) by SGD with the same step size $\varepsilon(t)$, using the batch
${B}_t^0$. Each agent trains for 20{,}000 iterations.

\paragraph{Evaluation.}
For cordial agents, we report the loss~(14) of each agent, which is the objective of Theorem~2,
averaged over the $N$ agents. For isolated agents, we report the loss~(4). The smallest possible
isolated loss is $\mathbb{E}\{\|Y_i - A^*_i X_i\|^2\}$, attained by $M_i = A^*_i$. We compute
both losses in closed form from the known input covariance, so the curves carry no evaluation noise.
The residualization in~(14) uses the true covariance $\mathrm{Cov}(X_j, X_i)$, while the agents
themselves only use the estimates of Algorithm~1. Each configuration was repeated for 100
independent runs, each with new parameters, a new initialization and new samples.

\paragraph{Results.}
Figure~\ref{fig:linear} shows the loss of cordial and isolated agents for all six configurations. Isolated agents converge to
the best isolated predictor within about 100 iterations. Their median loss is at most
$1.6\%$ above the best isolated loss from then on. Cordial agents start with a higher loss,
since their coordination matrices are initialized at random, but it drops quickly. After 20
iterations, it is lower than that of the isolated agents in 94--100\% of the runs of each
configuration, and from iteration 140 on, it is lower in all 600 runs. The loss of
cordial agents then keeps decreasing towards the global minimum. After 20{,}000 iterations, the
median ratio between the loss of cordial and isolated agents ranges from $0.36\%$ ($N = 8$,
Gaussian inputs) to $1.4\%$ ($N = 32$, Laplacian inputs), and it is below $2.4\%$ in
every run. On the log-log scale, the loss of cordial agents decreases almost linearly, with a slope
between $-0.58$ and $-0.49$ from iteration 2{,}000 to 20{,}000 in all configurations. The
loss therefore decreases like a power of $t$. The loss at the last iteration is larger with more agents and with Laplacian inputs, while the rate of decrease is
about the same in all configurations.

\begin{figure}[t]
  \centering
  \includegraphics[width=\textwidth]{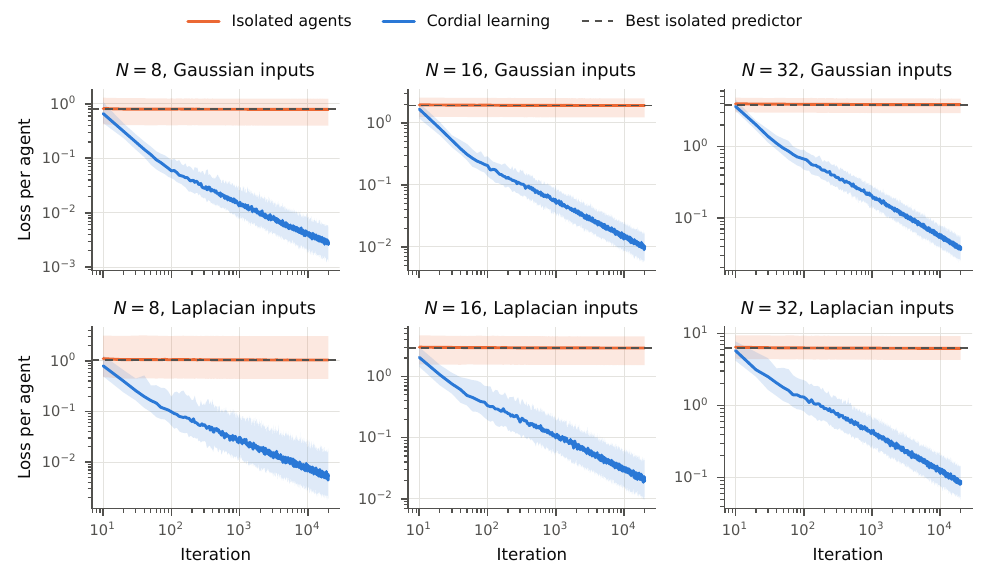}
  \caption{Linear model: loss of cordial agents~(14) and of isolated agents~(4) for
  $N = 8, 16, 32$ agents with Gaussian (top) and Laplacian (bottom) inputs. The lines show the median
  over 100 runs of the loss averaged over the agents, and the shaded bands show the 10th to 90th
  percentiles across runs. The dashed line is the median loss of the best isolated predictor
  $M_i = A^*_i$. The global minimum of the loss of cordial agents is $0$. Both axes are
  logarithmic.}
  \label{fig:linear}
\end{figure}

\subsection{MNIST-Based Multi-Agent Simulations}
\label{sec:exp-mnist}

\paragraph{Input Data Generation.}
Each data sample is a horizontal concatenation of $N$ MNIST digits, which forms a $28 \times 28N$
image. Agent $i$ sees the $i$-th $28 \times 28$ digit and has to classify it. The digits of a sample
are correlated, and the rule that generates them defines the dataset. Section~\ref{sec:exp-pairwise}
uses a deterministic rule (Pairwise-10), and Section~\ref{sec:exp-common} a probabilistic one (a
common cause). We vary the amount of data by the number $P$ of MNIST images per digit.
The training composites are built from a pool of $P$ MNIST training images of each digit, and the
validation composites from a second pool of $P$ training images of each digit, disjoint from the
first. The test composites are built from a pool of $P$ MNIST test images of each digit. Therefore,
no image appears in two of the three sets. A composite takes a random image from the pool of each of
its digits. Pixel values are scaled to $[-1, 1]$. We call one such random construction of the training, validation and test sets a draw of the images.

\paragraph{Neural Network Architectures.}
Every agent runs the LeNet-5-style network of Figure~\ref{fig:Architecture-shared}. Its embedding layers
are \texttt{conv1} (6 filters of size $5 \times 5$, padded to keep the $28 \times 28$ size),
\texttt{pool1} ($2 \times 2$ max pooling), \texttt{conv2} (16 filters of size $5 \times 5$),
\texttt{pool2}, \texttt{conv3} (120 filters of size $5 \times 5$), \texttt{fc1} (84 units) and
\texttt{fc2} (10 units). All layers use ReLU activations, and dropout follows every convolution,
every pooling layer and \texttt{fc1}. The output $h_i \in \mathbb{R}^{10}$ of \texttt{fc2}, after
its ReLU, is the embedded output of agent $i$, the analogue of $M_i X_i$. We compare three architectures:
\begin{itemize}
  \item \textbf{Cordial Learning}: Each agent sends $h_i$ to all others, both in training and
  at test time. During training, the communicated embedded outputs are computed with dropout, as in the forward pass. In its layer \texttt{fc3}, agent $i$ computes
  $z_i = W_i h_i + b_i + \sum_{j \neq i} \Lambda^i_j h_j + c_i$, where
  $W_i, \Lambda^i_j \in \mathbb{R}^{10 \times 10}$ are weight matrices and $b_i, c_i \in \mathbb{R}^{10}$ are biases, and its
  prediction is the softmax of \texttt{fc4}$(\mathrm{ReLU}(z_i))$, where \texttt{fc4} has 10 units.
  The matrices $\Lambda^i_j$ and the bias $c_i$ form the coordination layer, which adds the message term
  $\sum_{j \neq i} \Lambda^i_j h_j + c_i$ to $z_i$. It is initialized to zero. The received embedded outputs $h_j$ are treated as
  constants, so the parameters of agent $i$ are updated only by the gradient of its own loss, as in
  Algorithm~1.
  \item \textbf{Isolated Agents}: The same network without the message term. The agents do not
  communicate.
  \item \textbf{Centralized Architecture}: A single standard CNN that receives the entire
  $28 \times 28N$ image and predicts all the digits jointly. It has three convolutional layers with $5 \times 5$ filters (padded to keep the size), each followed by ReLU and $2 \times 2$ max pooling, a fully connected hidden
  layer, and an output layer with $10N$ units, with a softmax over each group of 10. Dropout follows
  every pooling layer and the hidden layer. The numbers of filters and hidden units are tuned from (6, 16, 120) filters with 128 hidden units, (16, 32, 64) with 128 or 256, and (32, 64, 128) with 256. It is trained on the sum of the losses of all
  agents.
\end{itemize}

\paragraph{Training and Evaluation.}
All models are trained with Adam on batches of 65 composites, reshuffled every epoch. The loss is
the sum over the agents of the cross-entropy of each agent's digit. In epoch $e$, the learning rate of all
layers except the coordination layers is $\varepsilon_e = \varepsilon_0 / (1 + e/t_0)^{0.6}$. The
coordination layers are not trained in the first $K$ epochs, and after that their learning rate is
$\eta_e = \eta_0 / (1 + (e - K)/t_0)$. Therefore, $\eta_e/\varepsilon_e \to 0$ and
$\sum_e \eta_e^2/\varepsilon_e < \infty$, as in Theorem~2. With Adam, however, these are nominal
learning rates and not the step sizes of Theorem~2. The delay $K$ lets the embeddings become informative
before the agents start to rely on them. Isolated agents and the centralized network use the
schedule $\varepsilon_e$ for all their layers.

The hyperparameters of every method were tuned with 4 agents on the validation set of the first draw
of the images (see below), separately for every dataset and data size, with the same number of runs
for all methods. The initial learning rate $\varepsilon_0$ is drawn log-uniformly from
$[5 \cdot 10^{-4}, 5 \cdot 10^{-2}]$ ($[10^{-4}, 10^{-2}]$ for the centralized network), $t_0$ from
$\{1, 3, 10, 30, 100, 300\}$ (without the largest values for larger $P$), and the dropout rate from
$\{0.1, 0.25, 0.5\}$. For cordial agents, $\eta_0$ is also drawn log-uniformly from
$[10^{-4}, 10^{-1}]$, and $K$ from a grid that scales with the number of epochs (e.g.,
$\{0, 25, 50, 100, 200, 400\}$ for $P = 25$). For the centralized network, we also tune the numbers
of filters and hidden units. For each method, we trained 40 random configurations, then 40
perturbations of the five best, and then retrained the three best with three more random seeds. For Pairwise-10 and for the common-cause dataset with $p = 0.6$, with $P = 25$ and $P = 500$, the candidates also include the configurations of an earlier random search, the same for all methods (12 and 10 per method, respectively, with epoch limits of 1{,}000 and 120). The configuration with the lowest validation loss, averaged over its four seeds, was selected. Tuning runs stop after
200, 80, and 15 epochs without an improvement of the validation loss for $P = 25$, 100, and
500, respectively, or at an epoch limit of 1{,}500, 600, and 180 epochs.

With the selected hyperparameters, each method was trained with 10 random seeds for a fixed number
of epochs: either the maximal number of epochs for tuning, or 1.5 times this number if one of the selected configurations reached its best validation loss in the last 10\% of the tuning epochs. The number of agents $N \in \{2, 4, 6\}$ changes, but the hyperparameters tuned with 4 agents are kept. We keep the network with the lowest validation loss during training, which
we evaluate every 5, 2, and 1 epochs for $P = 25, 100$, and 500 (for cordial agents, from
epoch $K$ on), and evaluate it on the test set without dropout. We report the test loss per agent
(the sum over agents of the mean test cross-entropy, divided by $N$) and the test accuracy averaged
over the agents. For cordial agents, we also report the test accuracy when the message term is
removed, which shows how much the agents rely on the received embedded outputs. The selected
hyperparameters are listed in Table~\ref{tab:hparams}.

With few images, the results depend on which images are drawn into the three pools. We therefore repeated the final runs of all methods on two further independent draws
of the training, validation and test pools, with the same hyperparameters, numbers of epochs and
random seeds, and report the results of all three draws together (30 runs per method). To compare two
methods, we compare every run of one method with every run of the other on the same draw (100
pairs per draw) and report the number of pairs in which the first method has the lower test loss.

 The received embedded outputs are not residualized, so the coordination layers use the raw values
  $h_j$. Without residualization, the second batch ${B}_t^1$ is not needed. We also do not project the coordination layers.

\begin{table}[t]
  \centering
  \scriptsize
  \setlength{\tabcolsep}{3pt}
  \secsixfit{\begin{tabular}{llrrrrrrrrrrrr}
\toprule
 & & \multicolumn{5}{c}{Cordial learning} & \multicolumn{3}{c}{Isolated agents} & \multicolumn{4}{c}{Centralized} \\
\cmidrule(lr){3-7} \cmidrule(lr){8-10} \cmidrule(lr){11-14}
Dataset & $P$ & $\varepsilon_0$ & $\eta_0$ & $K$ & $t_0$ & Drop. & $\varepsilon_0$ & $t_0$ & Drop. & $\varepsilon_0$ & $t_0$ & Width & Drop. \\
\midrule
Pairwise-10 & 25 & 0.0023 & 0.0052 & 50 & 300 & 0.5 & 0.0012 & 300 & 0.5 & 0.006 & 300 & 16-32-64; 128 & 0.5 \\
 & 100 & 0.0025 & 0.0071 & 100 & 100 & 0.5 & 0.0015 & 100 & 0.25 & 0.0041 & 100 & 16-32-64; 256 & 0.5 \\
 & 500 & 0.0011 & 0.018 & 10 & 10 & 0.25 & 0.001 & 30 & 0.25 & 0.0017 & 30 & 32-64-128; 256 & 0.5 \\
\midrule
Common, $p = 0.8$ & 25 & 0.0041 & 0.00017 & 100 & 100 & 0.5 & 0.0018 & 300 & 0.5 & 0.002 & 30 & 32-64-128; 256 & 0.5 \\
 & 100 & 0.0065 & 0.00019 & 0 & 30 & 0.25 & 0.0017 & 100 & 0.25 & 0.0025 & 100 & 32-64-128; 256 & 0.5 \\
 & 500 & 0.001 & 0.0001 & 0 & 30 & 0.25 & 0.0052 & 10 & 0.25 & 0.0015 & 30 & 32-64-128; 256 & 0.5 \\
\midrule
Common, $p = 0.6$ & 25 & 0.004 & 0.0013 & 400 & 30 & 0.5 & 0.0017 & 300 & 0.5 & 0.00075 & 10 & 32-64-128; 256 & 0.5 \\
 & 100 & 0.0044 & 0.00031 & 10 & 100 & 0.25 & 0.0014 & 30 & 0.25 & 0.0032 & 100 & 16-32-64; 128 & 0.5 \\
 & 500 & 0.0014 & 0.00028 & 0 & 30 & 0.25 & 0.0019 & 10 & 0.25 & 0.0012 & 30 & 32-64-128; 256 & 0.5 \\
\bottomrule
\end{tabular}
 }
  \caption{Selected hyperparameters for Pairwise-10 and the common-cause datasets with $p = 0.8$ and
  $p = 0.6$ (tuned with 4 agents): initial learning rates $\varepsilon_0$ and
  $\eta_0$, decay constant $t_0$, start $K$ of the coordination layers and dropout rate. The width of the
  centralized network gives the numbers of filters of its three convolutional layers and the number
  of hidden units.}
  \label{tab:hparams}
\end{table}

\begin{figure*}[h]
    \centering
        {%
        \includegraphics[width=0.8\linewidth,keepaspectratio]{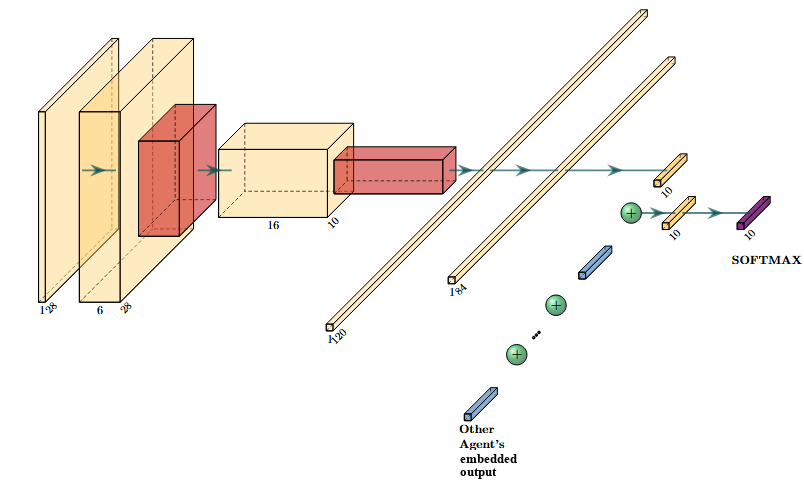}%
        \caption{NN Architecture}
        \label{fig:Architecture-shared}%
        }
\end{figure*}

\subsubsection{The Pairwise-10 Dataset}
\label{sec:exp-pairwise}

In the Pairwise-10 dataset, the digits of a sample satisfy two rules. The digits at positions
$(1,2), (3,4), \ldots$ sum to 10, and so do the digits at positions $(2,4), (4,6), \ldots$. The valid
tuples of digits are therefore $(a, 10-a)$, $(a, 10-a, 10-a, a)$ and $(a, 10-a, 10-a, a, a, 10-a)$ for
$N = 2, 4, 6$, with $a \in \{1, \ldots, 9\}$. In particular, the digit of any single agent determines
the digits of all other agents. Each of the three sets has $P$ composites of every valid tuple, that
is, $9P$ composites. We report $P = 25$, 100, and 500, from 225 to 4{,}500 composites per set. The final runs have 2{,}250, 600, and 180 epochs, respectively.

\paragraph{Results.}
Figure~\ref{fig:pairwise} shows the test loss during training for $P = 25$, 100 and 500, and
Table~\ref{tab:pairwise} lists the test loss and accuracy of the selected networks. CL is much better than isolated training in every setting. For $P = 25$,
the median test loss per agent of isolated agents is $0.45$, $0.30$ and $0.30$ for 2, 4 and 6 agents.
Cordial learning reduces it to $0.23$, $0.014$ and $0.0028$, and raises the median accuracy from
$0.88$, $0.91$ and $0.91$ to $0.94$, $0.996$ and $1.000$. For $P = 500$, isolated agents reach an
accuracy of $0.98$ and a test loss of $0.06$--$0.07$, and CL reduces the test loss to
$0.008$, $0.0009$ and less than $10^{-4}$. In every setting, every run of CL has a lower test loss than every run of isolated agents on the same draw, except for 2 agents with $P = 25$, where this holds for 297 of the 300 pairs of runs. The gain grows
with the number of agents, which is expected, since every additional agent brings another view of
the same information. The draw of the images matters for $P = 25$: the median test loss of isolated
agents with 2 agents is $0.31$, $0.47$ and $0.49$ on the three draws, and that of CL is
$0.093$, $0.30$ and $0.26$. The advantage of CL holds on every draw.

The centralized network shows what can be achieved when observing all digits. With $P = 500$, it reaches a test loss of  $0.002$ for 2 agents and less than
$0.0001$ for 4 and 6 agents. CL reaches almost the same accuracy (at least $0.998$),
although the agents only exchange their 10-dimensional embedded outputs and never their inputs or gradients. The centralized network has the lower test loss in most
pairs of runs.

With less data, the centralized network often failed to train. With the selected configuration, 47
of its 180 runs with $P = 25$ and $P = 100$ did not learn at all: the network predicts the uniform
distribution over the nine possible digits for every agent, with a test loss of $\ln 9 \approx 2.2$.
This happened in 7, 12 and 15 of the 30 runs with $P = 25$ and 2, 4 and 6 agents, and in 0, 6 and 7
of the 30 runs with $P = 100$. This occurred even though none of the four tuning runs of the selected configuration failed.
None of the 180 runs of CL with $P \le 100$ failed, and their lowest accuracy is $0.90$.
For $P = 25$ and 2, 4 and 6 agents, the median test loss of CL is $0.23$, $0.014$ and
$0.0028$, and that of the centralized network is $0.30$, $0.027$ and $1.1$. Without the runs that did
not learn, the median test loss of the centralized network is $0.23$, $0.0002$ and less than $10^{-4}$.

The gain of CL comes from the received embedded outputs. When the message term is
removed, the accuracy of cordial agents drops to between $0.65$ and $0.95$, below that of isolated
agents in every setting. Cordial agents learn to combine their own image with the embedded outputs
of the others, instead of classifying their own digit alone. This fits the structure of
Pairwise-10, where a single received embedded output can carry as much information about the digit
of agent $i$ as its own image.

\begin{figure}[t]
  \centering
  \includegraphics[width=\textwidth]{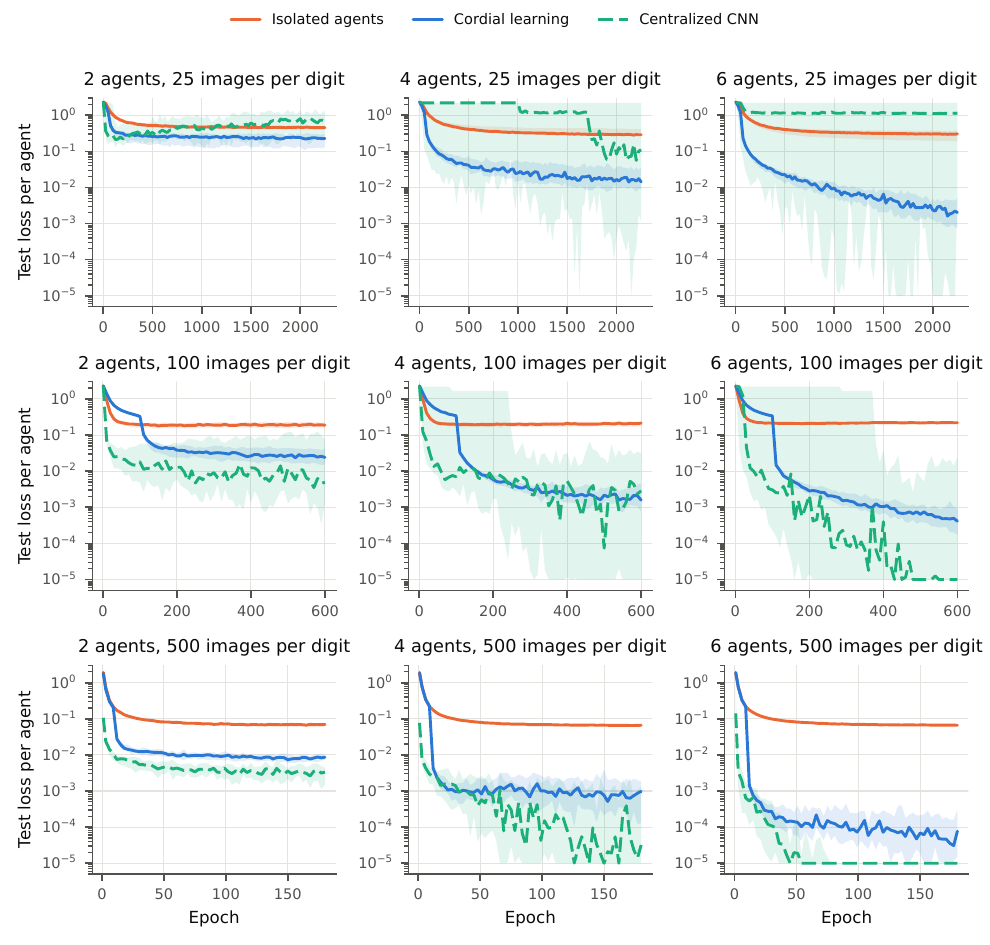}
  \caption{Pairwise-10: test loss per agent during training of isolated agents, cordial agents and
  the centralized network. The lines show the median over 30 runs (10 random seeds on each of three draws of the images) and
  the shaded bands the interquartile range. The sharp drop in the loss of cordial agents is at epoch
  $K$, when the coordination layers start training. The vertical axis is logarithmic, and losses below
  $10^{-5}$ (including losses that are exactly zero in single precision) are drawn at $10^{-5}$.}
  \label{fig:pairwise}
\end{figure}

\begin{table}[t]
  \centering
  \footnotesize
  \setlength{\tabcolsep}{4pt}
  \begin{tabular}{lllcccc}
\toprule
 & & & & \multicolumn{2}{c}{Test accuracy} & \\
\cmidrule(lr){5-6}
$P$ & $N$ & Method & Runs & & messages removed & Test loss per agent \\
\midrule
25 & 2 & Isolated & 30 & 0.883 & -- & 0.454 [0.397, 0.495] \\
 &  & Cordial learning & 30 & 0.944 & 0.760 & 0.232 [0.117, 0.302] \\
 &  & Centralized & 30 (7) & 0.963 & -- & 0.299 [0.146, 0.826] \\
\addlinespace
 & 4 & Isolated & 30 & 0.913 & -- & 0.295 [0.274, 0.421] \\
 &  & Cordial learning & 30 & 0.996 & 0.684 & 0.014 [0.0087, 0.036] \\
 &  & Centralized & 30 (12) & 0.996 & -- & 0.027 [0.00013, 2.197] \\
\addlinespace
 & 6 & Isolated & 30 & 0.911 & -- & 0.296 [0.191, 0.364] \\
 &  & Cordial learning & 30 & 1.000 & 0.651 & 0.0028 [0.00088, 0.0044] \\
 &  & Centralized & 30 (15) & 0.547 & -- & 1.128 [$<10^{-4}$, 2.197] \\
\midrule
100 & 2 & Isolated & 30 & 0.960 & -- & 0.182 [0.161, 0.205] \\
 &  & Cordial learning & 30 & 0.993 & 0.928 & 0.022 [0.016, 0.039] \\
 &  & Centralized & 30 & 0.998 & -- & 0.015 [0.0013, 0.071] \\
\addlinespace
 & 4 & Isolated & 30 & 0.961 & -- & 0.194 [0.159, 0.214] \\
 &  & Cordial learning & 30 & 0.999 & 0.903 & 0.0016 [0.00089, 0.0032] \\
 &  & Centralized & 30 (6) & 0.999 & -- & 0.0017 [$<10^{-4}$, 0.015] \\
\addlinespace
 & 6 & Isolated & 30 & 0.953 & -- & 0.208 [0.189, 0.216] \\
 &  & Cordial learning & 30 & 1.000 & 0.882 & 0.00032 [0.00022, 0.00068] \\
 &  & Centralized & 30 (7) & 1.000 & -- & $<10^{-4}$ [$<10^{-4}$, 0.023] \\
\midrule
500 & 2 & Isolated & 30 & 0.981 & -- & 0.069 [0.059, 0.073] \\
 &  & Cordial learning & 30 & 0.998 & 0.954 & 0.0078 [0.0066, 0.011] \\
 &  & Centralized & 30 & 0.999 & -- & 0.0024 [0.0017, 0.0036] \\
\addlinespace
 & 4 & Isolated & 30 & 0.982 & -- & 0.063 [0.062, 0.067] \\
 &  & Cordial learning & 30 & 1.000 & 0.910 & 0.00085 [0.00029, 0.0018] \\
 &  & Centralized & 30 & 1.000 & -- & $<10^{-4}$ [$<10^{-4}$, 0.0016] \\
\addlinespace
 & 6 & Isolated & 30 & 0.981 & -- & 0.066 [0.064, 0.069] \\
 &  & Cordial learning & 30 & 1.000 & 0.916 & $<10^{-4}$ [$<10^{-4}$, 0.00013] \\
 &  & Centralized & 30 & 1.000 & -- & $<10^{-4}$ [$<10^{-4}$, $<10^{-4}$] \\
\bottomrule
\end{tabular}
 
  \caption{Pairwise-10: test accuracy (averaged over the agents) and test loss per agent of the
  network with the lowest validation loss, median over the runs, with the interquartile range of the
  loss in brackets. \emph{Runs} is the number of runs: 30 for $P = 25$, 100 and 500 (three draws of the
  images, 10 random seeds each); the number of runs that did not learn (test loss
  $\ln 9$) is given in parentheses. For cordial agents, \emph{messages removed}
  is the test accuracy when the message term is removed.}
  \label{tab:pairwise}
\end{table}

\subsubsection{The Common-Cause Datasets}
\label{sec:exp-common}

In Pairwise-10, the digit of one agent determines the digits of all others. In the common-cause
datasets, the dependence is probabilistic and much weaker. A hidden class $z$ is drawn uniformly from
$\{0, \ldots, 9\}$, and the digit of each agent is independently equal to $z$ with probability $p$
and uniform on the nine other digits otherwise. The digit of every agent is therefore uniform, and
the digits of two agents are equal with probability $p^2 + (1-p)^2/9$. For $p = 0.1$ the digits are
independent, and as $p$ grows, the other agents carry more information about the digit of agent $i$.
We report $p = 0.8$ and $p = 0.6$. As a reference for this information, the best prediction of the
digit of an agent from the true digits of the other $N - 1$ agents alone has an accuracy of $0.64$,
$0.74$ and $0.79$ for $N = 2, 4, 6$ with $p = 0.8$, and $0.38$, $0.45$ and $0.54$ with $p = 0.6$. Each of
the three sets has $9P$ composites, the same number as in Pairwise-10. All hyperparameters were tuned separately for
each value of $p$ and $P$, as described above. The final runs have 1{,}500, 900, and 180 epochs
for $p = 0.8$ and 2{,}250, 900, and 180 epochs for $p = 0.6$, for $P = 25$, 100, and 500.

\paragraph{Results.}
Figures~\ref{fig:common80} and~\ref{fig:common60} show the test loss during training for
$p = 0.8$ and $p = 0.6$, and Table~\ref{tab:common} lists the test loss and accuracy of all methods.

For $p = 0.8$ and $P \ge 100$, CL is better than isolated training in every setting
and on every draw of the images. It has the lower test loss in 256 to 300 of the 300 pairs of runs
for $P = 100$ and $P = 500$. Its median test loss is
18\% to 35\% lower, and its error rate is 15\% to 28\% lower than that of isolated agents. The gain is
larger with 4 and 6 agents than with 2 agents. For $P = 500$ and 6 agents the test loss drops from $0.068$ to $0.044$ and the accuracy rises from $0.982$ to $0.987$. For
$P = 25$, CL is better with 6 agents (274 of 300 pairs, a 9\% lower loss); with 2
and 4 agents, the two methods are on par, where CL wins in 155 and 191 of 300 pairs, respectively. 

For $p = 0.6$, the less correlated scenario, the gain of CL depends on the amount of data. For $P = 500$, CL is better on every draw, in 246 to 282 of 300 pairs, with a 7\% to 13\% lower test loss. For $P = 100$, it is better with 4 agents (210 of 300 pairs) and on par with 2 and 6 agents. For $P = 25$, CL is worse than isolated training
with 4 agents on every draw (78 of 300 pairs) and with 6 agents on two of the three draws (98 of 300
pairs), and on par with 2 agents. A likely reason is that the coordination layers overfit. With
225 training composites, the embedded output of a neighbor may identify its digit almost perfectly on
the training images but not on new images, and the dependence between the digits is too weak to
make up for these errors. Tuning selected the latest start of the
coordination layers in its grid ($K = 400$ of 2{,}250 epochs), which suggests that on the
validation set, the received embedded outputs help little. 

When the message term is removed, the accuracy of cordial agents drops in every setting with
$P \ge 100$, and in 7 of these 12 settings it falls below that of isolated agents. For example, with $p=0.8$, accuracy drops from $0.985$ to $0.977$ for $P = 500$ and 4 agents, against $0.980$ for isolated agents. Unlike in Pairwise-10, where the received embedded outputs can replace the agent's own image, here they presumably correct the predictions on images whose digit is ambiguous.

For $P = 500$, the centralized network is the best method and its
test loss is 19\% to 36\% lower than that of CL. However, with $P = 25$, it is the worst method, with a test loss of $0.39$ to $0.97$, against $0.30$ to $0.38$ for isolated agents and $0.31$ to $0.42$ for CL. Its validation loss is
lowest after a median of 96 to 242 epochs, while CL reaches its lowest validation loss
near the end of training (after a median of 1{,}314 to 2{,}144 epochs), so the centralized network
presumably overfits the 225 training composites. For $P = 100$, 18 of its 90 runs with $p = 0.8$ and
2 of its 90 runs with $p = 0.6$ did not learn at all and stayed at the uniform prediction (test loss
$\ln 10$), as in Pairwise-10. 
CL has a lower median test loss than the centralized network in all settings with
$P = 25$, for $p = 0.8$ with $P = 100$ and 4 or 6 agents, and for $p = 0.6$ with $P = 100$ and 6 agents. Unlike in Pairwise-10, this is not due to the runs that did not learn: without them, the median
test loss of the centralized network for $P = 100$ is $0.16$ and $0.30$ for $p = 0.8$ with 4 and
6 agents, and $0.26$ for $p = 0.6$ with 6 agents.

\begin{figure}[t]
  \centering
  \includegraphics[width=\textwidth]{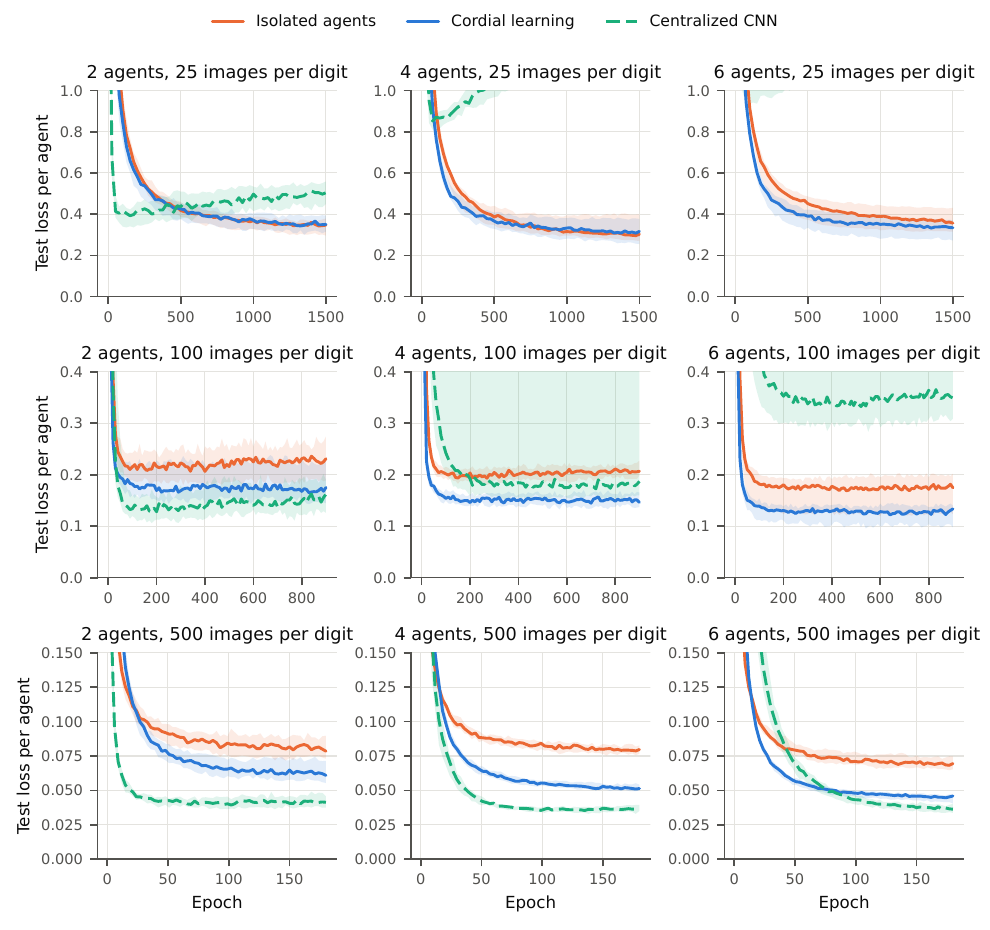}
  \caption{Common-cause dataset with $p = 0.8$: test loss per agent during training of isolated
  agents, cordial agents and the centralized network. The lines show the median over 30 runs (10 random seeds on each of 3
  draws) and the shaded bands the interquartile range.}
  \label{fig:common80}
\end{figure}

\begin{figure}[t]
  \centering
  \includegraphics[width=\textwidth]{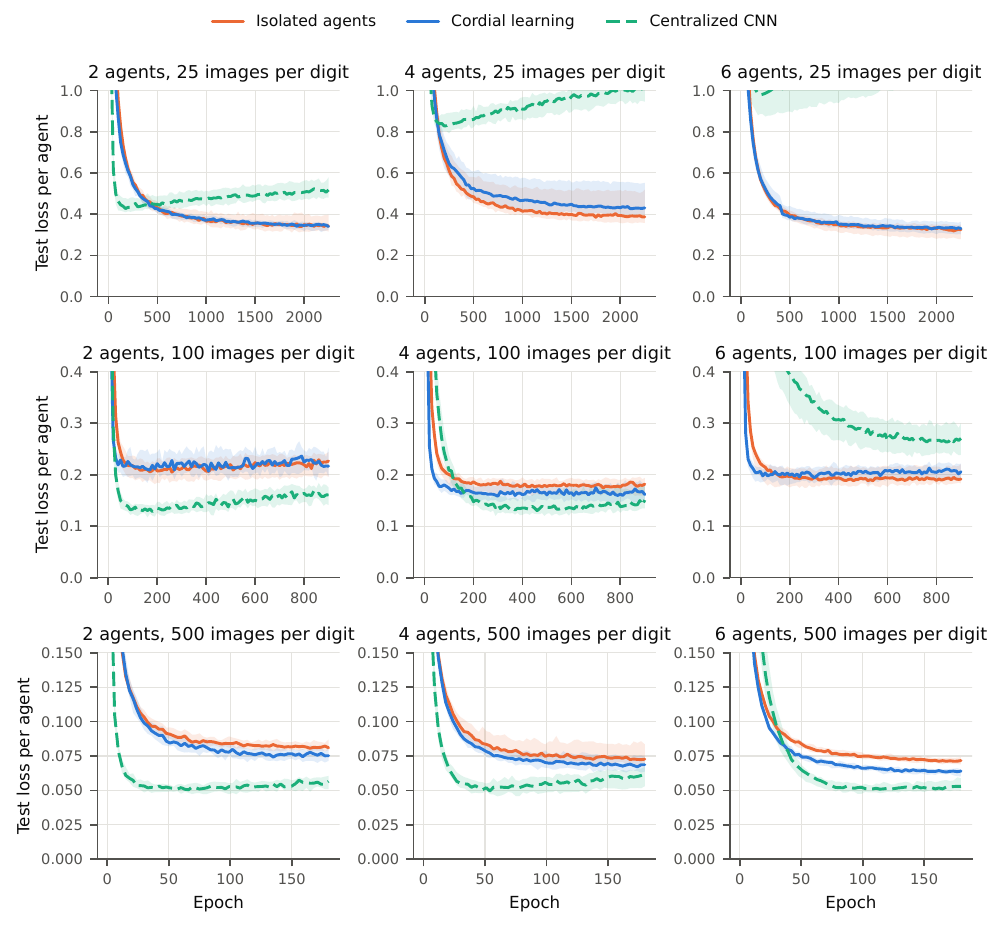}
  \caption{Common-cause dataset with $p = 0.6$: test loss per agent during training, as in
  Figure~\ref{fig:common80}.}
  \label{fig:common60}
\end{figure}

\begin{table}[t]
  \centering
  \scriptsize
  \setlength{\tabcolsep}{3.5pt}
  \begin{tabular}{llrrrrrrrrrr}
\toprule
 & & & & \multicolumn{3}{c}{Test loss per agent} & \multicolumn{4}{c}{Test accuracy} & \\
\cmidrule(lr){5-7} \cmidrule(lr){8-11}
$p$ & $P$ & $N$ & Runs & ISO & CL & Cen. & ISO & CL & CL, removed & Cen. & CL $<$ ISO \\
\midrule
0.8 & 25 & 2 & 30 & 0.350 & 0.350 & 0.395 & 0.910 & 0.910 & 0.897 & 0.900 & 52\% \\
 &  & 4 & 30 & 0.298 & 0.310 & 0.827 & 0.914 & 0.911 & 0.891 & 0.771 & 64\% \\
 &  & 6 & 30 & 0.360 & 0.328 & 0.952 & 0.904 & 0.906 & 0.862 & 0.762 & 91\% \\
\addlinespace
 & 100 & 2 & 30 & 0.203 & 0.166 & 0.122$^{(1)}$ & 0.952 & 0.961 & 0.955 & 0.971 & 85\% \\
 &  & 4 & 30 & 0.196 & 0.147 & 0.175$^{(8)}$ & 0.956 & 0.967 & 0.958 & 0.961 & 99\% \\
 &  & 6 & 30 & 0.172 & 0.119 & 0.319$^{(9)}$ & 0.961 & 0.971 & 0.957 & 0.913 & 99.7\% \\
\addlinespace
 & 500 & 2 & 30 & 0.080 & 0.061 & 0.039 & 0.980 & 0.983 & 0.978 & 0.989 & 97\% \\
 &  & 4 & 30 & 0.079 & 0.051 & 0.035 & 0.980 & 0.985 & 0.977 & 0.991 & 100\% \\
 &  & 6 & 30 & 0.068 & 0.044 & 0.036 & 0.982 & 0.987 & 0.978 & 0.990 & 100\% \\
\midrule
0.6 & 25 & 2 & 30 & 0.334 & 0.341 & 0.428 & 0.912 & 0.909 & 0.908 & 0.878 & 52\% \\
 &  & 4 & 30 & 0.381 & 0.422 & 0.826 & 0.892 & 0.876 & 0.878 & 0.759 & 26\% \\
 &  & 6 & 30 & 0.319 & 0.326 & 0.975 & 0.916 & 0.908 & 0.899 & 0.703 & 33\% \\
\addlinespace
 & 100 & 2 & 30 & 0.208 & 0.213 & 0.134 & 0.953 & 0.958 & 0.957 & 0.968 & 43\% \\
 &  & 4 & 30 & 0.180 & 0.158 & 0.130$^{(1)}$ & 0.959 & 0.965 & 0.963 & 0.971 & 70\% \\
 &  & 6 & 30 & 0.191 & 0.195 & 0.257$^{(1)}$ & 0.952 & 0.954 & 0.950 & 0.936 & 53\% \\
\addlinespace
 & 500 & 2 & 30 & 0.080 & 0.074 & 0.049 & 0.978 & 0.980 & 0.979 & 0.987 & 82\% \\
 &  & 4 & 30 & 0.072 & 0.067 & 0.051 & 0.979 & 0.982 & 0.979 & 0.987 & 93\% \\
 &  & 6 & 30 & 0.072 & 0.063 & 0.050 & 0.981 & 0.982 & 0.979 & 0.988 & 94\% \\
\bottomrule
\end{tabular}
  \caption{Common-cause datasets: median test loss per agent and median test accuracy (averaged over
  the agents) of isolated agents (ISO), cordial learning (CL) and the centralized network (Cen.).
  \emph{Runs} is the number of runs per method (three draws of the images for $P = 25$, 100 and 500).
  \emph{CL, removed} is the accuracy of cordial agents when the message term is removed. \emph{CL $<$ ISO} is the fraction of pairs of runs on the same draw in which cordial learning
  has the lower test loss. A superscript gives the number of runs of the centralized network that did
  not learn (test loss was $\ln 10$).}
  \label{tab:common}
\end{table}

\FloatBarrier
\section{Conclusions}
We studied a distributed learning scenario in which agents' data are correlated. This generalizes the federated learning scenario where agents have independent data. To allow agents to learn from each other, we propose cordial learning, where agents send their embedded outputs to others. The agents train layers to learn how to extract valuable information from the embedded outputs of other agents. For a linear model where the label of an agent is a linear mixture of the inputs of all agents, under mild assumptions, we proved that cordial learning converges to models that minimize the sum of losses of all agents. Experiments on multi-digit MNIST with deep neural networks show that cordial learning performs well even beyond the theoretical model we analyzed. 

Our analysis focused on asymptotic convergence to the optimal models. The question of analyzing and optimizing the convergence rate remains open. The related question of proving optimal regret bounds in online learning settings is also open. 

Cordial learning is a first step towards studying distributed learning scenarios with correlated data. A challenging open question from this perspective is designing a cordial learning scheme where agents learn how to efficiently compress the information they send based on the loss functions of the receiving agents. 

\section*{Acknowledgments}
This research was supported by the Koret Foundation grant for Smart Cities and Digital Living 2030.

\appendix
\renewcommand{\theHsection}{app.\arabic{section}}
\section{Proof of Lemma \ref{lem:expectation}} \label{sec:Proof of Lemma 3}

We start by writing the stochastic loss:
\begin{multline}
    L_{i}=\left(Y_{i}-M_{i}\tilde{Z}_{i,0}^{i}-\sum_{j=1,j\neq i}^{N}\Lambda_{j}^{i}M_{j}\tilde{Z}_{j,0}^{i}\right)^{T}\left(Y_{i}-M_{i}\tilde{Z}_{i,0}^{i}-\sum_{j=1,j\neq i}^{N}\Lambda_{j}^{i}M_{j}\tilde{Z}_{j,0}^{i}\right)=\\
    Y_{i}^{T}Y_{i}-Y_{i}^{T}M_{i}\tilde{Z}_{i,0}^{i}-Y_{i}^{T}\sum_{j=1,j\neq i}^{N}\Lambda_{j}^{i}M_{j}\tilde{Z}_{j,0}^{i}-(M_{i}\tilde{Z}_{i,0}^{i})^{T}Y_{i}+(M_{i}\tilde{Z}_{i,0}^{i})^{T}M_{i}\tilde{Z}_{i,0}^{i} \\+ (M_{i}\tilde{Z}_{i,0}^{i})^{T}\sum_{j=1,j\neq i}^{N}\Lambda_{j}^{i}M_{j}\tilde{Z}_{j,0}^{i}
    -\left(\sum_{j=1,j\neq i}^{N}\Lambda_{j}^{i}M_{j}\tilde{Z}_{j,0}^{i}\right)^{T}Y_{i}+\left(\sum_{j=1,j\neq i}^{N}\Lambda_{j}^{i}M_{j}\tilde{Z}_{j,0}^{i}\right)^{T}M_{i}\tilde{Z}_{i,0}^{i} \\ +\left(\sum_{j=1,j\neq i}^{N}\Lambda_{j}^{i}M_{j}\tilde{Z}_{j,0}^{i}\right)^{T}\sum_{j=1,j\neq i}^{N}\Lambda_{j}^{i}M_{j}\tilde{Z}_{j,0}^{i}.\label{eq:27a}
\end{multline}

Thus the gradient of $L_i$ with respect to $M_i$ is
\begin{align}
\frac{\partial L_i}{\partial M_i}
&= \frac{\partial}{\partial M_i}\Big\{-2Y_i^T M_i\tilde{Z}^i_{i,0} + (M_i\tilde{Z}^i_{i,0})^T M_i\tilde{Z}^i_{i,0} + 2\Big(\sum_{j=1,j\neq i}^{N}\Lambda^i_j M_j\tilde{Z}^i_{j,0}\Big)^T M_i\tilde{Z}^i_{i,0}\Big\} \nonumber\\
&= -2Y_i\tilde{Z}^{iT}_{i,0} + 2M_i\tilde{Z}^i_{i,0}\tilde{Z}^{iT}_{i,0} + 2\Big(\sum_{j=1,j\neq i}^{N}\Lambda^i_j M_j\tilde{Z}^i_{j,0}\Big)\tilde{Z}^{iT}_{i,0}, \label{eq:28}
\end{align}
where we used the derivative rules $\frac{\partial(Y^TMX)}{\partial M} = YX^T$ and $\frac{\partial(X^TA^TAX)}{\partial A} = 2AXX^T$. The stochastic gradient $\boldsymbol{g}_{M_i}$ in \eqref{eq:15} is \eqref{eq:28} with $\Lambda^i_j$ replaced by its projection $\Pi_t\left[\Lambda^i_j(t)\right]$.

The gradient of $L_i$ with respect to $\Lambda^i_j$ is
\begin{equation*}
\frac{\partial L_i}{\partial \Lambda^i_j} = -2\Big(Y_i - M_i\tilde{Z}^i_{i,0} - \sum_{l=1,l\neq i}^{N}\Lambda^i_l M_l\tilde{Z}^i_{l,0}\Big)\big(M_j\tilde{Z}^i_{j,0}\big)^T.
\end{equation*}
Algorithm~1 replaces $\tilde{Z}^i_{j,0}$ in the last factor by $\tilde{Z}^i_{j,1}$, which is computed from the independent batch $B^1_t$. This gives
\begin{equation}
\boldsymbol{g}_{\Lambda^i_j} = -2Y_i(M_j\tilde{Z}^i_{j,1})^T + 2M_i\tilde{Z}^i_{i,0}(M_j\tilde{Z}^i_{j,1})^T + 2\Big(\sum_{l=1,l\neq i}^{N}\Lambda^i_l M_l\tilde{Z}^i_{l,0}\Big)(M_j\tilde{Z}^i_{j,1})^T. \label{eq:29}
\end{equation}
With this replacement, the two covariance estimates in each product are independent, which makes \eqref{eq:29} unbiased (see \eqref{eq:32a}).

The expected gradients are as follows:
\begin{multline}
    \mathbb{E}\left\{ \boldsymbol{g}_{M_i} \right\} =\mathbb{E}\left\{ 2\left(M_{i}-A_{i}\right)\tilde{Z}_{i,0}^{i}\tilde{Z}_{i,0}^{iT}+2\sum_{j=1,j\neq i}^{N}\left(\Lambda_{j}^{i}M_{j}\tilde{Z}_{j,0}^{i}-B_{j}^{i}X_{j}\right)\tilde{Z}_{i,0}^{iT}\right\} = \\
    2\left(M_{i}-A_{i}-\sum_{j=1,j\neq i}^{N}B_{j}^{i}\Cov\left(X_{j},X_{i}\right)\right) \label{eq:30a}
\end{multline}
where we used 
\begin{equation}
\mathbb{E}
\left\{
\tilde{Z}_{j,0}^iX_i^T
\right\}
=
\mathbb{E}
\left\{
\left(
X_j-\tilde{R}_{j,0}^iX_i
\right)X_i^T
\right\}
\nonumber=
\mathbb{E}\{X_jX_i^T\}
-
\mathbb{E}\{\tilde{R}_{j,0}^i\}
\mathbb{E}\{X_iX_i^T\}
\nonumber=0,
\label{eq:31}
\end{equation}
and $\mathbb{E}\left\{ X_{j}\tilde{Z}_{i,0}^{iT}\right\} =\Cov\left(X_{j},X_{i}\right)$. 

Next,
\begin{align}
\mathbb{E}
\left\{
\boldsymbol{g}_{\Lambda_j^i}
\right\}
&=
\mathbb{E}
\Bigg\{
\Bigg(
2(M_i-A_i)X_i
+
2\sum_{k=1,k\neq i}^{N}
\left(
\Lambda_k^iM_k\tilde{Z}_{k,0}^i
-
B_k^iX_k
\right)
\Bigg)
\left(\tilde{Z}_{j,1}^i\right)^T
M_j^T
\Bigg\}
\nonumber\\
&=
2\sum_{k=1,k\neq i}^{N}
\left(
\Lambda_k^iM_k-B_k^i
\right)
C_{k,j}M_j^T, 
\label{eq:31a}
\end{align}
where we used 
\begin{multline}
\mathbb{E}
\left\{
\tilde{Z}_{k,0}^i
\left(\tilde{Z}_{j,1}^i\right)^T
\right\}
=
\mathbb{E}
\left\{
\left(
X_k-\tilde{R}_{k,0}^iX_i
\right)
\left(
X_j-\tilde{R}_{j,1}^iX_i
\right)^T
\right\}=\\
\mathbb{E}\{X_kX_j^T\}
-
\mathbb{E}\{X_kX_i^T\}
\mathbb{E}\{(\tilde{R}_{j,1}^i)^T\}
-
\mathbb{E}\{\tilde{R}_{k,0}^i\}
\mathbb{E}\{X_iX_j^T\}
+
\mathbb{E}\{\tilde{R}_{k,0}^i\}
\mathbb{E}\{X_iX_i^T\}
\mathbb{E}\{(\tilde{R}_{j,1}^i)^T\}=\\
\mathbb{E}
\left\{
X_k
\left(
X_j-\tilde{R}_{j,1}^iX_i
\right)^T
\right\}
=\mathbb{E}\{X_kX_j^T\}
-
\operatorname{Cov}(X_k,X_i)
\operatorname{Cov}(X_j,X_i)^T
=C_{k,j}.
\label{eq:32a}
\end{multline}

Here we used that $\tilde{R}_{k,0}^i$ is computed using the
leave-one-out samples from $B_t^0$, while
$\tilde{R}_{j,1}^i$ is computed using the independent batch
$B_t^1$. Hence, $\tilde{R}_{k,0}^i$,
$\tilde{R}_{j,1}^i$, and the given sample are mutually independent. Moreover,
\[
\mathbb{E}\{\tilde{R}_{k,0}^i\}
=
\operatorname{Cov}(X_k,X_i),
\qquad
\mathbb{E}\{\tilde{R}_{j,1}^i\}
=
\operatorname{Cov}(X_j,X_i).
\]

\section{Nonconvex Objective} \label{sec:Nonconvex Objective}

We show that the global objective in~(7) is nonconvex,
even in the two-agent scalar case. Let $d=m=1$, and let $X_1$ and
$X_2$ be zero-mean random variables satisfying $\mathbb{E}[X_1^2]=\mathbb{E}[X_2^2]=1$ and $\mathbb{E}[X_1X_2]=0$.
    
Consider the noiseless labels $Y_1=X_2$ and  $Y_2=X_2$. Then $A_1^*=0$ and $A_2^*=1$, so Assumption~1 is satisfied by taking
$\Lambda_2^{1*}=1$ and $\Lambda_1^{2*}=0$.

For two scalar agents, the global objective is
\[
\begin{aligned}
F(M_1,M_2,\Lambda_1,\Lambda_2)
={}&
\mathbb{E}\!\left[
    \left(
        Y_1-M_1X_1-\Lambda_1M_2X_2
    \right)^2
\right] \\
&+
\mathbb{E}\!\left[
    \left(
        Y_2-M_2X_2-\Lambda_2M_1X_1
    \right)^2
\right]
\end{aligned}
\]
where $\Lambda_1 = \Lambda^1_2$ and $\Lambda_2 = \Lambda^2_1$. 

Next we restrict this objective to the affine subspace $M_1 = 0$ and $\Lambda_2 = 0$. On this subspace,
\[
\begin{aligned}
h(M_2,\Lambda_1)
&\triangleq F(0,M_2,\Lambda_1,0) \\
&=
\mathbb{E}\!\left[
    \left(X_2-\Lambda_1 M_2X_2\right)^2
\right]
+
\mathbb{E}\!\left[
    \left(X_2-M_2X_2\right)^2
\right] \\
&=
(1-\Lambda_1 M_2)^2+(1-M_2)^2.
\end{aligned}
\]
The Hessian of $h$ is
\[
    \nabla^2 h(M_2,\Lambda_1)
    =
    \begin{pmatrix}
        2+2\Lambda_1^2 & -2+4\Lambda_1 M_2 \\
        -2+4\Lambda_1 M_2 & 2M_2^2
    \end{pmatrix}.
\]
At $(M_2,\Lambda_1)=(0,0)$,
\[
    \nabla^2 h(0,0)
    =
    \begin{pmatrix}
        2 & -2 \\
        -2 & 0
    \end{pmatrix},
\]
whose determinant is $-4$. Hence,
$\nabla^2 h(0,0)$ is indefinite, and $h$ is not convex.

Since the restriction of a convex function to any affine subspace
must be convex, the full objective
$F(M_1,M_2,\Lambda_1,\Lambda_2)$ is not convex.
Therefore, the global objective in~(7) is nonconvex. 

\section{Proof of Theorem \ref{thm:main}} \label{sec:Proof of Theorem}
We start by proving the convergence of $M_i(t)$. We have 
 
\begin{multline}
\mathbb{E}\left\{ \sum_{i=1}^{N}\left\Vert A_{i}^{*}-M_{i}\left(t+1\right)\right\Vert ^{2}\mid\mathcal{F}_{t}\right\} =\mathbb{E}\left\{ \sum_{i=1}^{N}\left\Vert A_{i}^{*}-M_{i}\left(t\right)+\varepsilon\left(t\right)\boldsymbol{g}_{M_{i}}\left(t\right)\right\Vert ^{2}\mid\mathcal{F}_{t}\right\} = \\ 
\sum_{i=1}^{N}\left\Vert A_{i}^{*}-M_{i}\left(t\right)\right\Vert ^{2}+2\varepsilon\left(t\right)\sum_{i=1}^{N}\mathbb{E}\left\{ \left\langle \boldsymbol{g}_{M_{i}}\left(t\right),A_{i}^{*}-M_{i}\left(t\right)\right\rangle \mid\mathcal{F}_{t}\right\} +\varepsilon^{2}\left(t\right)\sum_{i=1}^{N}\mathbb{E}\left\{ \left\Vert \boldsymbol{g}_{M_{i}}\left(t\right)\right\Vert ^{2}\mid\mathcal{F}_{t}\right\} \underset{\left(a\right)}{\leq} \\ 
\left(1+K_{1}\varepsilon^{2}\left(t\right)\rho^{2}\left(t\right)\right)\sum_{i=1}^{N}\left\Vert A_{i}^{*}-M_{i}\left(t\right)\right\Vert ^{2}-4\varepsilon\left(t\right)\sum_{i=1}^{N}\left\Vert A_{i}^{*}-M_{i}\left(t\right)\right\Vert ^{2}+K_{2}\varepsilon^{2}\left(t\right)\rho^{2}\left(t\right)\label{eq:32}
\end{multline}
where (a) uses Lemma \ref{lem:expectation} and Lemma \ref{lem:variance}. We also used that $M_{i}(t)$ is measurable by $\mathcal{F}_{t}$.

Given the inequality above, we can apply the Robbins-Siegmund Theorem
\citep{robbins1971convergence} with (using their notation)
$z_{t}=\sum_{i=1}^{N}\left\Vert A_{i}^{*}-M_{i}\left(t\right)\right\Vert ^{2}$, $\beta_{t}=K_{1}\varepsilon^{2}\left(t\right)\rho^{2}(t)$, 
$\xi_{t}=K_{2}\varepsilon^{2}\left(t\right)\rho^{2}(t)$ and 
$\zeta_{t}=4\varepsilon\left(t\right)\sum_{i=1}^{N}\left\Vert A_{i}^{*}-M_{i}\left(t\right)\right\Vert ^{2}$.
Hence, with probability 1, for all $i$:
\begin{equation}
\sum_{t=1}^{\infty}\varepsilon\left(t\right)\sum_{i=1}^{N}\left\Vert A_{i}^{*}-M_{i}\left(t\right)\right\Vert ^{2}<\infty\label{eq:33}
\end{equation}
and $\sum_{i=1}^{N}\left\Vert A_{i}^{*}-M_{i}\left(t\right)\right\Vert ^{2}$ converges to some random variable, which must be zero with probability 1, since any positive probability to be positive would contradict \eqref{eq:33}, since $\sum_{t=1}^{\infty}\varepsilon\left(t\right)=\infty$ by assumption.

Now let $f_{i}\left(\Lambda_{1}^{i},\ldots,\Lambda_{N}^{i}\right)\triangleq\mathbb{E}\left\{ \left\Vert \sum_{k=1,k\neq i}^{N}\left(\Lambda_{k}^{i}A_{k}^{*}-B_{k}^{i}\right)Z_{k}^{i}\right\Vert ^{2}\right\}$. We have for some constant $K_0>0$
\begin{multline}
\mathbb{E}\left\{ \sum_{j\neq i}\left\Vert \Lambda_{j}^{i}\left(t+1\right)-\Lambda_{j}^{i*}\right\Vert ^{2}\mid\mathcal{F}_{t}\right\} =\mathbb{E}\left\{ \sum_{j\neq i}\left\Vert \Lambda_{j}^{i}\left(t\right)-\eta\left(t\right)\boldsymbol{g}_{\Lambda_{j}^{i}}\left(t\right)-\Lambda_{j}^{i*}\right\Vert ^{2}\mid\mathcal{F}_{t}\right\} = \\
\sum_{j\neq i}\left\Vert \Lambda_{j}^{i}\left(t\right)-\Lambda_{j}^{i*}\right\Vert ^{2}-2\eta\left(t\right)\mathbb{E}\left\{ \sum_{j\neq i}\left\langle \boldsymbol{g}_{\Lambda_{j}^{i}}\left(t\right),\Lambda_{j}^{i}\left(t\right)-\Lambda_{j}^{i*}\right\rangle \mid\mathcal{F}_{t}\right\} +\eta^{2}\left(t\right)\mathbb{E}\left\{ \sum_{j\neq i}\left\Vert \boldsymbol{g}_{\Lambda_{j}^{i}}\left(t\right)\right\Vert ^{2}\mid\mathcal{F}_{t}\right\} \underset{\left(a\right)}{\leq}\\
\sum_{j\neq i}\left\Vert \Lambda_{j}^{i}\left(t\right)-\Lambda_{j}^{i*}\right\Vert ^{2}-4\eta\left(t\right)\sum_{j\neq i}\left\langle \sum_{k\neq i}\left(\Lambda_{k}^{i}\left(t\right)M_{k}\left(t\right)-B_{k}^{i}\right)C_{k,j}M_{j}^{T}\left(t\right),\Lambda_{j}^{i}\left(t\right)-\Lambda_{j}^{i*}\right\rangle \\
+K_0\eta^{2}\left(t\right)\sum_{l=1}^{N}\left(1+\left\Vert \Lambda_{l}^{i}\left(t\right)\right\Vert ^{2}\right)\sum_{k=1}^{N}\left(1+2\left\Vert M_{k}\left(t\right)-A_{k}^{*}\right\Vert ^{2}+\left\Vert M_{k}\left(t\right)-A_{k}^{*}\right\Vert ^{4}\right)
\label{eq:34}
\end{multline}
where (a) follows from Lemma \ref{lem:expectation} and Lemma \ref{lem:variance}.

To deal with the bias term in \eqref{eq:34}, we write:
\begin{multline}
    \sum_{j\neq i}\left\langle \sum_{k\neq i}\left(\Lambda_{k}^{i}\left(t\right)M_{k}\left(t\right)-B_{k}^{i}\right)C_{k,j}M_{j}^{T}\left(t\right),\Lambda_{j}^{i}\left(t\right)-\Lambda_{j}^{i*}\right\rangle = \\
\sum_{j\neq i}\left\langle \sum_{k\neq i}\left(\Lambda_{k}^{i}\left(t\right)A_{k}^{*}-B_{k}^{i}\right)C_{k,j}A_{j}^{*T},\Lambda_{j}^{i}\left(t\right)-\Lambda_{j}^{i*}\right\rangle + \\
\sum_{j\neq i}\left\langle \sum_{k\neq i}B_{k}^{i}C_{k,j}\left(A_{j}^{*T}-M_{j}^{T}\left(t\right)\right),\Lambda_{j}^{i}\left(t\right)-\Lambda_{j}^{i*}\right\rangle + \\
\sum_{j\neq i}\left\langle \sum_{k\neq i}\Lambda_{k}^{i}\left(t\right)\left(M_{k}\left(t\right)C_{k,j}M_{j}^{T}\left(t\right)-A_{k}^{*}C_{k,j}A_{j}^{*T}\right),\Lambda_{j}^{i}\left(t\right)-\Lambda_{j}^{i*}\right\rangle. \label{eq:35}
\end{multline}

 For the first term in \eqref{eq:35}, we use the convexity of $f_i$ to obtain

\begin{multline}
\sum_{j\neq i}\left\langle \sum_{k=1,k\neq i}^{N}\left(\Lambda_{k}^{i}\left(t\right)A_{k}^{*}-B_{k}^{i}\right)C_{k,j}A_{j}^{*T},\Lambda_{j}^{i}\left(t\right)-\Lambda_{j}^{i*}\right\rangle =\\
\sum_{j\neq i}\left\langle \frac{1}{2}\nabla_{\Lambda_{j}^{i}}f_{i}\left(\Lambda_{1}^{i}\left(t\right),\ldots,\Lambda_{N}^{i}\left(t\right)\right),\Lambda_{j}^{i}\left(t\right)-\Lambda_{j}^{i*}\right\rangle \geq \\ \frac{1}{2}f_{i}\left(\Lambda_{1}^{i}\left(t\right),\ldots,\Lambda_{N}^{i}\left(t\right)\right)-\frac{1}{2}f_{i}\left(\Lambda_{1}^{i*},\ldots,\Lambda_{N}^{i*}\right)=\frac{1}{2}f_{i}\left(\Lambda_{1}^{i}\left(t\right),\ldots,\Lambda_{N}^{i}\left(t\right)\right) \label{eq:36}
\end{multline}
since $f_{i}\left(\Lambda_{1}^{i*},\ldots,\Lambda_{N}^{i*}\right)=0$ by definition of $\Lambda_{j}^{i*}$.

For the second term in \eqref{eq:35} we  use:
\begin{multline}
    -\sum_{j\neq i}\left\langle \sum_{k\neq i}B_{k}^{i}C_{k,j}\left(A_{j}^{*T}-M_{j}^{T}\left(t\right)\right),\Lambda_{j}^{i}\left(t\right)-\Lambda_{j}^{i*}\right\rangle \underset{\left(a\right)}{\leq}\\
\sum_{j\neq i}\sum_{k\neq i}\left\Vert B_{k}^{i}C_{k,j}\left(A_{j}^{*T}-M_{j}^{T}\left(t\right)\right)\right\Vert \left\Vert \Lambda_{j}^{i}\left(t\right)-\Lambda_{j}^{i*}\right\Vert \leq\\
C_{5}\sum_{j\neq i}\left\Vert M_{j}\left(t\right)-A_{j}^{*}\right\Vert \left\Vert \Lambda_{j}^{i}\left(t\right)-\Lambda_{j}^{i*}\right\Vert \underset{\left(b\right)}{\leq} \\ C_{5}\sum_{j\neq i}\left\Vert M_{j}\left(t\right)-A_{j}^{*}\right\Vert \left\Vert \Lambda_{j}^{i}\left(t\right)-\Lambda_{j}^{i*}\right\Vert ^{2}+C_{5}\sum_{j\neq i}\left\Vert M_{j}\left(t\right)-A_{j}^{*}\right\Vert  \label{eq:39}
\end{multline}
where $\left(a\right)$ follows from the Cauchy-Schwarz inequality and the triangle inequality, and $\left(b\right)$ uses $x \leq x^2 + 1$.

For the third term in \eqref{eq:35} we use:
\begin{multline}
-\sum_{j\neq i}\left\langle \sum_{k\neq i}\Lambda_{k}^{i}\left(t\right)\left(M_{k}\left(t\right)C_{k,j}M_{j}^{T}\left(t\right)-A_{k}^{*}C_{k,j}A_{j}^{*T}\right),\Lambda_{j}^{i}\left(t\right)-\Lambda_{j}^{i*}\right\rangle \underset{\left(a\right)}{\leq}\\
\sum_{j\neq i}\left\Vert \Lambda_{j}^{i}\left(t\right)-\Lambda_{j}^{i*}\right\Vert \sum_{k\neq i}\left\Vert \Lambda_{k}^{i}\left(t\right)\left(M_{k}\left(t\right)-A_{k}^{*}\right)C_{k,j}A_{j}^{*T}\right\Vert \\
+\sum_{j\neq i}\left\Vert \Lambda_{j}^{i}\left(t\right)-\Lambda_{j}^{i*}\right\Vert \sum_{k\neq i}\left\Vert \Lambda_{k}^{i}\left(t\right)M_{k}\left(t\right)C_{k,j}\left(M_{j}^{T}\left(t\right)-A_{j}^{*T}\right)\right\Vert \leq\\
\sum_{j\neq i}\sum_{k\neq i}\left\Vert \left(\Lambda_{k}^{i}\left(t\right)-\Lambda_{k}^{i*}\right)\left(M_{k}\left(t\right)-A_{k}^{*}\right)C_{k,j}A_{j}^{*T}\right\Vert \left\Vert \Lambda_{j}^{i}\left(t\right)-\Lambda_{j}^{i*}\right\Vert \\
+\sum_{j\neq i}\sum_{k\neq i}\left\Vert \Lambda_{k}^{i*}\left(M_{k}\left(t\right)-A_{k}^{*}\right)C_{k,j}A_{j}^{*T}\right\Vert \left\Vert \Lambda_{j}^{i}\left(t\right)-\Lambda_{j}^{i*}\right\Vert \\
+\sum_{j\neq i}\sum_{k\neq i}\left\Vert \left(\Lambda_{k}^{i}\left(t\right)-\Lambda_{k}^{i*}\right)M_{k}\left(t\right)C_{k,j}\left(M_{j}^{T}\left(t\right)-A_{j}^{*T}\right)\right\Vert \left\Vert \Lambda_{j}^{i}\left(t\right)-\Lambda_{j}^{i*}\right\Vert \\
+\sum_{j\neq i}\sum_{k\neq i}\left\Vert \Lambda_{k}^{i*}M_{k}\left(t\right)C_{k,j}\left(M_{j}^{T}\left(t\right)-A_{j}^{*T}\right)\right\Vert \left\Vert \Lambda_{j}^{i}\left(t\right)-\Lambda_{j}^{i*}\right\Vert \underset{\left(b\right)}{\leq}\\
C_{1}\sum_{k=1}^{N}\left\Vert M_{k}\left(t\right)-A_{k}^{*}\right\Vert \sum_{l\neq i}\left\Vert \Lambda_{l}^{i}\left(t\right)-\Lambda_{l}^{i*}\right\Vert +C_{2}\sum_{k=1}^{N}\left\Vert M_{k}\left(t\right)-A_{k}^{*}\right\Vert \sum_{l\neq i}\left\Vert \Lambda_{l}^{i}\left(t\right)-\Lambda_{l}^{i*}\right\Vert ^{2}+ \\
C_{3}\sum_{k=1}^{N}\left\Vert M_{k}\left(t\right)-A_{k}^{*}\right\Vert ^{2}\sum_{l\neq i}\left\Vert \Lambda_{l}^{i}\left(t\right)-\Lambda_{l}^{i*}\right\Vert ^{2}+C_{4}\sum_{k=1}^{N}\left\Vert M_{k}\left(t\right)-A_{k}^{*}\right\Vert ^{2}\sum_{l\neq i}\left\Vert \Lambda_{l}^{i}\left(t\right)-\Lambda_{l}^{i*}\right\Vert \leq \\
 C_{5}\sum_{k=1}^{N}\left\Vert M_{k}\left(t\right)-A_{k}^{*}\right\Vert \sum_{l\neq i}\left\Vert \Lambda_{l}^{i}\left(t\right)-\Lambda_{l}^{i*}\right\Vert ^{2}+ C_{1}\sum_{k=1}^{N}\left\Vert M_{k}\left(t\right)-A_{k}^{*}\right\Vert + \\
C_{6}\sum_{k=1}^{N}\left\Vert M_{k}\left(t\right)-A_{k}^{*}\right\Vert ^{2}\sum_{l\neq i}\left\Vert \Lambda_{l}^{i}\left(t\right)-\Lambda_{l}^{i*}\right\Vert ^{2}+C_{4}\sum_{k=1}^{N}\left\Vert M_{k}\left(t\right)-A_{k}^{*}\right\Vert ^{2} \label{eq:37}
\end{multline}
where $\left(a\right)$ follows from the Cauchy-Schwarz inequality and the triangle inequality. In (b) we used that, for $a_l = \|\Lambda^i_l(t) - \Lambda^{i*}_l\|$ ($l \neq i$), $a_i = 0$, $b_l = \|M_l(t) - A^*_l\|$, $S_a = \sum_l a_l$, $S_b = \sum_l b_l$, $S_{a^2} = \sum_l a_l^2$ and $S_{b^2} = \sum_l b_l^2$, Cauchy-Schwarz and the L1-L2 norm inequality yield
\begin{multline}
\sum_{k,l} a_k a_l b_k + \sum_{k,l} a_k b_l b_k + \sum_{k,l} a_k b_l + \sum_{k,l} a_k a_l b_k b_l
= \\ (S_a + S_b) \sum_k a_k b_k + S_a S_b + \Big(\sum_k a_k b_k\Big)^2 \le \\
(S_a + S_b)\sqrt{S_{a^2} S_{b^2}} + S_a S_b + S_{a^2} S_{b^2} \le
\sqrt{N}\left(\sqrt{S_{a^2}} + \sqrt{S_{b^2}}\right)\sqrt{S_{a^2} S_{b^2}} + S_a S_b + S_{a^2} S_{b^2} \le \\
\sqrt{N} S_{a^2} S_b + \sqrt{N} S_a S_{b^2} + S_a S_b + S_{a^2} S_{b^2}. \label{eq:38}
\end{multline}

Plugging \eqref{eq:35}, \eqref{eq:36}, \eqref{eq:37} and \eqref{eq:39} into \eqref{eq:34}, we obtain 

\begin{multline}
\mathbb{E}\left\{ \sum_{j\neq i}\left\Vert \Lambda_{j}^{i}\left(t+1\right)-\Lambda_{j}^{i*}\right\Vert ^{2}\mid\mathcal{F}_{t}\right\} \leq \\
\sum_{j\neq i}\left\Vert \Lambda_{j}^{i}\left(t\right)-\Lambda_{j}^{i*}\right\Vert ^{2} -2\eta\left(t\right)f_{i}\left(\Lambda_{1}^{i}\left(t\right),\ldots,\Lambda_{N}^{i}\left(t\right)\right) \\
+C_{1}\sum_{k=1}^{N}\eta\left(t\right)\left(\left\Vert M_{k}\left(t\right)-A_{k}^{*}\right\Vert + \left\Vert M_{k}\left(t\right)-A_{k}^{*}\right\Vert^{2} +\eta\left(t\right)\left(\left\Vert M_{k}\left(t\right)-A_{k}^{*}\right\Vert ^{4}+1\right)\right)\sum_{j\neq i}\left\Vert \Lambda_{j}^{i}\left(t\right)-\Lambda_{j}^{i*}\right\Vert ^{2}\\+ C_{2}\sum_{k=1}^{N}\eta\left(t\right)\left(\left\Vert M_{k}\left(t\right)-A_{k}^{*}\right\Vert + \left\Vert M_{k}\left(t\right)-A_{k}^{*}\right\Vert^{2} +\eta\left(t\right)\left(\left\Vert M_{k}\left(t\right)-A_{k}^{*}\right\Vert ^{4}+1\right)\right).
\label{eq:40}
\end{multline}

Given \eqref{eq:40}, we can again apply the Robbins-Siegmund Theorem with 
$z_{t}=\sum_{j\neq i}\left\Vert \Lambda_{j}^{i}\left(t\right)-\Lambda_{j}^{i*}\right\Vert ^{2}$, 

\begin{equation}
    \beta_{t}= C_{1}\sum_{k=1}^{N}\eta\left(t\right)\left(\left\Vert M_{k}\left(t\right)-A_{k}^{*}\right\Vert + \left\Vert M_{k}\left(t\right)-A_{k}^{*}\right\Vert^{2} +\eta\left(t\right)+\eta\left(t\right)\left\Vert M_{k}\left(t\right)-A_{k}^{*}\right\Vert ^{4}\right) \label{eq:41}
\end{equation}
\begin{equation}
    \xi_{t}= C_{2}\sum_{k=1}^{N}\eta\left(t\right)\left(\left\Vert M_{k}\left(t\right)-A_{k}^{*}\right\Vert + \left\Vert M_{k}\left(t\right)-A_{k}^{*}\right\Vert^{2} +\eta\left(t\right)+\eta\left(t\right)\left\Vert M_{k}\left(t\right)-A_{k}^{*}\right\Vert ^{4}\right) \label{eq:42} 
\end{equation}
 and $\zeta_{t}=2\eta\left(t\right)f_{i}\left(\Lambda_{1}^{i}\left(t\right),\ldots,\Lambda_{N}^{i}\left(t\right)\right).$

We know from \eqref{eq:33}  that $\sum_{t=1}^{\infty}\varepsilon\left(t\right)\left\Vert A_{i}^{*}-M_{i}\left(t\right)\right\Vert ^{2}<\infty$. Thus, using the assumption that $\sum_{t=1}^{\infty}\frac{\eta^{2}\left(t\right)}{\varepsilon\left(t\right)}<\infty$:

\begin{multline}
    \sum_{t=1}^{\infty}\eta\left(t\right)\left\Vert A_{i}^{*}-M_{i}\left(t\right)\right\Vert =\sum_{t=1}^{\infty}\frac{\eta\left(t\right)}{\sqrt{\varepsilon\left(t\right)}}\sqrt{\varepsilon\left(t\right)}\left\Vert A_{i}^{*}-M_{i}\left(t\right)\right\Vert \leq \\ \sum_{t=1}^{\infty}\frac{\eta^{2}\left(t\right)}{\varepsilon\left(t\right)}+\sum_{t=1}^{\infty}\varepsilon\left(t\right)\left\Vert A_{i}^{*}-M_{i}\left(t\right)\right\Vert ^{2}<\infty.\label{eq:43}
\end{multline}

Then, $\sum_{t=1}^{\infty}\eta\left(t\right)\left\Vert A_{i}^{*}-M_{i}\left(t\right)\right\Vert ^{2}$ and $\sum_{t=1}^{\infty}\eta^{2}\left(t\right)\left\Vert A_{i}^{*}-M_{i}\left(t\right)\right\Vert ^{4}$ are also finite since $\left\Vert A_{i}^{*}-M_{i}\left(t\right)\right\Vert$ converges to zero with probability 1, and $\sum_{t=1}^{\infty} \eta^2\left(t\right)\leq \left(\sup_t \varepsilon\left(t\right)\right) \sum_{t=1}^{\infty}\frac{\eta^{2}\left(t\right)}{\varepsilon\left(t\right)}<\infty$.

Hence, the Robbins-Siegmund Theorem implies that with probability 1, for all $i$:
\begin{equation}
\sum_{t=1}^{\infty}\eta\left(t\right)f_{i}\left(\Lambda_{1}^{i}\left(t\right),\ldots,\Lambda_{N}^{i}\left(t\right)\right)<\infty\label{eq:50}
\end{equation}
and $\sum_{j\neq i}\left\Vert \Lambda_{j}^{i}\left(t\right)-\Lambda_{j}^{i*}\right\Vert ^{2}$
converges to some random variable $V$ as $t\rightarrow\infty$. In particular, $\{\boldsymbol{\Lambda}^i(t)\}_t$ is bounded.

Define the closed set
\[
    \mathcal{S}_i
    \triangleq
    \left\{
        \boldsymbol{\Lambda}^i:
        f_i(\boldsymbol{\Lambda}^i)=0
    \right\}
\]
which is nonempty by Assumption~1. Let
$\mathcal{S}_i^0$ be a countable dense subset of $\mathcal{S}_i$.
Applying the preceding Robbins-Siegmund argument to every fixed
$\boldsymbol{\Lambda}^{i*}\in\mathcal{S}_i^0$, and taking the
intersection of the resulting probability-one events, shows that,
with probability one, $\left\|\boldsymbol{\Lambda}^i(t)-\boldsymbol{\Lambda}^{i*}\right\|$ converges for every
$\boldsymbol{\Lambda}^{i*}\in\mathcal{S}_i^0$. The same conclusion
holds for every $\boldsymbol{\Lambda}^{i*}\in\mathcal{S}_i$ since for any 
$\boldsymbol{\Lambda}^{i*}\in\mathcal{S}_i$ and any sequence
$\boldsymbol{\Lambda}^{i*,r}\in\mathcal{S}_i^0$ converging to
$\boldsymbol{\Lambda}^{i*}$,
\[
\left|
 \left\|\boldsymbol{\Lambda}^i(t)-\boldsymbol{\Lambda}^{i*}\right\|
 -
 \left\|\boldsymbol{\Lambda}^i(t)-\boldsymbol{\Lambda}^{i*,r}\right\|
\right|
\leq
\left\|\boldsymbol{\Lambda}^{i*}
      -\boldsymbol{\Lambda}^{i*,r}\right\|,
\]
uniformly in $t$.

Moreover, since $\sum_t\eta(t)=\infty$,
\eqref{eq:50} implies that
\[
    \liminf_{t\to\infty}
    f_i\!\left(\boldsymbol{\Lambda}^i(t)\right)=0.
\]
Consequently, there exists a subsequence $\{t_r\}_r$ such that $ f_i\!\left(\boldsymbol{\Lambda}^i(t_r)\right)\rightarrow 0$. By boundedness, this subsequence has a further subsequence, which we
denote again by $\{t_r\}_r$, such that $\boldsymbol{\Lambda}^i(t_r)
    \rightarrow \overline{\boldsymbol{\Lambda}}^i$. Since $f_i$ is continuous, $f_i\!\left(\overline{\boldsymbol{\Lambda}}^i\right)=0$ and hence $\overline{\boldsymbol{\Lambda}}^i\in\mathcal{S}_i$.

As shown above,
$\|\boldsymbol{\Lambda}^i(t)
-\overline{\boldsymbol{\Lambda}}^i\|$
has a limit. Along the subsequence $\{t_r\}_r$, it converges
to zero. Therefore, its full-sequence limit is zero, and $ \boldsymbol{\Lambda}^i(t)
    \rightarrow\overline{\boldsymbol{\Lambda}}^i$ with probability one. By continuity of $f_i$, it follows that $ f_i\!\left(\boldsymbol{\Lambda}^i(t)\right)\rightarrow f_i\!\left(\overline{\boldsymbol{\Lambda}}^i\right)=0$ with probability one.

Together with $M_k(t)\to A_k^*$ for every $k$, and the boundedness of
$\{\Lambda_k^i(t)\}_t$, this implies that the loss of each agent $i$, $L_i$, 
converges to its global minimum $\sigma^2$ with probability one.

\bibliography{DistributedTrainingCorrelatedData}

@Article{bistritz2020distillation,
  author  = {Bistritz, Ilai and Mann, Ariana and Bambos, Nicholas},
  journal = {Advances in Neural Information Processing Systems},
  title   = {Distributed distillation for on-device learning},
  year    = {2020},
  pages   = {22593--22604},
  volume  = {33},
}

@article{bistritz2020cooperative,
  title={Cooperative Multi-player Bandit Optimization},
  author={Bistritz, Ilai and Bambos, Nicholas},
  journal={Advances in Neural Information Processing Systems},
  volume={33},
  year={2020}
}

@Article{ratliff2016characterization,
  author    = {Ratliff, Lillian J and Burden, Samuel A and Sastry, S Shankar},
  journal   = {IEEE Transactions on Automatic Control},
  title     = {On the characterization of local {N}ash equilibria in continuous games},
  year      = {2016},
  number    = {8},
  pages     = {2301--2307},
  volume    = {61},
  publisher = {IEEE},
}

@article{mazumdar2020gradient,
  title={On Gradient-Based Learning in Continuous Games},
  author={Mazumdar, Eric and Ratliff, Lillian J and Sastry, S Shankar},
  journal={SIAM Journal on Mathematics of Data Science},
  volume={2},
  number={1},
  pages={103--131},
  year={2020},
  publisher={SIAM}
}

@inproceedings{chasnov2020convergence,
  title={Convergence analysis of gradient-based learning in continuous games},
  author={Chasnov, Benjamin and Ratliff, Lillian and Mazumdar, Eric and Burden, Samuel},
  booktitle={Uncertainty in Artificial Intelligence},
  pages={935--944},
  year={2020},
  organization={PMLR}
}

@Article{Hong2022,
  author    = {Hong, Lingzhou and Garcia, Alfredo and Eksin, Ceyhun},
  journal   = {Automatica},
  title     = {Distributed networked learning with correlated data},
  year      = {2022},
  pages     = {110134},
  volume    = {137},
  publisher = {Elsevier},
}

@article{Narang2023,
  title={Multiplayer performative prediction: Learning in decision-dependent games},
  author={Narang, Adhyyan and Faulkner, Evan and Drusvyatskiy, Dmitriy and Fazel, Maryam and Ratliff, Lillian J},
  journal={Journal of Machine Learning Research},
  volume={24},
  number={202},
  pages={1--56},
  year={2023}
}

@Article{Kairouz2021,
  author    = {Kairouz, Peter and McMahan, H Brendan and Avent, Brendan and Bellet, Aur{\'e}lien and Bennis, Mehdi and Bhagoji, Arjun Nitin and Bonawitz, Kallista and Charles, Zachary and Cormode, Graham and Cummings, Rachel and others},
  journal   = {Foundations and Trends{\textregistered} in Machine Learning},
  title     = {Advances and open problems in federated learning},
  year      = {2021},
  number    = {1--2},
  pages     = {1--210},
  volume    = {14},
  publisher = {Now Publishers, Inc.},
}

@article{goodfellow2020,
  title={Generative adversarial networks},
  author={Goodfellow, Ian and Pouget-Abadie, Jean and Mirza, Mehdi and Xu, Bing and Warde-Farley, David and Ozair, Sherjil and Courville, Aaron and Bengio, Yoshua},
  journal={Communications of the ACM},
  volume={63},
  number={11},
  pages={139--144},
  year={2020},
  publisher={ACM New York, NY, USA}
}

@incollection{robbins1971convergence,
  title={A convergence theorem for non negative almost supermartingales and some applications},
  author={Robbins, Herbert and Siegmund, David},
  booktitle={Optimizing Methods in Statistics},
  pages={233--257},
  year={1971},
  publisher={Academic Press}
}

@article{liu2024vertical,
  title={Vertical federated learning: Concepts, advances, and challenges},
  author={Liu, Yang and Kang, Yan and Zou, Tianyuan and Pu, Yanhong and He, Yuanqin and Ye, Xiaozhou and Ouyang, Ye and Zhang, Ya-Qin and Yang, Qiang},
  journal={IEEE Transactions on Knowledge and Data Engineering},
  volume={36},
  number={7},
  pages={3615--3634},
  year={2024},
  publisher={IEEE}
}

@inproceedings{perdomo2020performative,
  title={Performative prediction},
  author={Perdomo, Juan and Zrnic, Tijana and Mendler-D{\"u}nner, Celestine and Hardt, Moritz},
  booktitle={International Conference on Machine Learning},
  pages={7599--7609},
  year={2020},
  organization={PMLR}
}

@InProceedings{linplug,
  title = 	 {Plug-in Performative Optimization},
  author =       {Lin, Licong and Zrnic, Tijana},
  booktitle = 	 {Proceedings of the 41st International Conference on Machine Learning},
  pages = 	 {30546--30565},
  year = 	 {2024},
  editor = 	 {Salakhutdinov, Ruslan and Kolter, Zico and Heller, Katherine and Weller, Adrian and Oliver, Nuria and Scarlett, Jonathan and Berkenkamp, Felix},
  volume = 	 {235},
  series = 	 {Proceedings of Machine Learning Research},
  month = 	 {21--27 Jul},
  publisher =    {PMLR},
}

@article{xu2025learning,
  title={Learning Global {N}ash Equilibrium in Team Competitive Games with Generalized Fictitious Cross-Play},
  author={Xu, Zelai and Yu, Chao and Liang, Yancheng and Wu, Yi and Wang, Yu},
  journal={Journal of Machine Learning Research},
  volume={26},
  number={44},
  pages={1--30},
  year={2025}
}

@article{mitra2021linear,
  title={Linear convergence in federated learning: Tackling client heterogeneity and sparse gradients},
  author={Mitra, Aritra and Jaafar, Rayana and Pappas, George J and Hassani, Hamed},
  journal={Advances in Neural Information Processing Systems},
  volume={34},
  pages={14606--14619},
  year={2021}
}

@article{koloskova2022sharper,
  title={Sharper convergence guarantees for asynchronous {SGD} for distributed and federated learning},
  author={Koloskova, Anastasiia and Stich, Sebastian U and Jaggi, Martin},
  journal={Advances in Neural Information Processing Systems},
  volume={35},
  pages={17202--17215},
  year={2022}
}

@article{wang2021cooperative,
  title={Cooperative {SGD}: A unified framework for the design and analysis of local-update {SGD} algorithms},
  author={Wang, Jianyu and Joshi, Gauri},
  journal={Journal of Machine Learning Research},
  volume={22},
  number={213},
  pages={1--50},
  year={2021}
}

@article{ren2015faster,
  title={Faster {R-CNN}: Towards Real-Time Object Detection with Region Proposal Networks},
  author={Ren, Shaoqing and He, Kaiming and Girshick, Ross and Sun, Jian},
  journal={Advances in Neural Information Processing Systems},
  volume={28},
  year={2015}
}

@inproceedings{sessa2019bounding,
  title={Bounding inefficiency of equilibria in continuous actions games using submodularity and curvature},
  author={Sessa, Pier Giuseppe and Kamgarpour, Maryam and Krause, Andreas},
  booktitle={The 22nd International Conference on Artificial Intelligence and Statistics},
  pages={2017--2027},
  year={2019},
  organization={PMLR}
}

@article{roughgarden2015intrinsic,
  title={Intrinsic robustness of the price of anarchy},
  author={Roughgarden, Tim},
  journal={Journal of the ACM (JACM)},
  volume={62},
  number={5},
  pages={1--42},
  year={2015},
  publisher={ACM New York, NY, USA}
}

@article{maurer2006bounds,
  title={Bounds for linear multi-task learning},
  author={Maurer, Andreas},
  journal={Journal of Machine Learning Research},
  volume={7},
  pages={117--139},
  year={2006},
  publisher={JMLR. org}
}

@article{heliou2017learning,
  title={Learning with bandit feedback in potential games},
  author={H{\'e}liou, Am{\'e}lie and Cohen, Johanne and Mertikopoulos, Panayotis},
  journal={Advances in Neural Information Processing Systems},
  volume={30},
  year={2017}
}

@article{bistritz2022no,
  title={No weighted-regret learning in adversarial bandits with delays},
  author={Bistritz, Ilai and Zhou, Zhengyuan and Chen, Xi and Bambos, Nicholas and Blanchet, Jose},
  journal={Journal of Machine Learning Research},
  volume={23},
  number={139},
  pages={1--43},
  year={2022}
}

\end{document}